\newtheorem{theorem}{Theorem}
\newtheorem{lemma}{Lemma}
\newtheorem{hyp}{Hypothesis}
\newtheorem{defin}{Definition}
\newtheorem{pro}{Proposition}
\newcommand\mycorr[1]{{#1}}
\newcommand\projecturl{\url{https://github.com/yjq8812/aistats2016}}
\begin{document}

\twocolumn[

\aistatstitle{A Convex Surrogate Operator for General Non-Modular Loss Functions}

\aistatsauthor{
  Jiaqian Yu
\And
Matthew B.\ Blaschko
}

\aistatsaddress{
Inria \& CentraleSup\'{e}lec, Universit\'{e} Paris-Saclay\\
Grande Voie des Vignes\\
92295 Ch\^{a}tenay-Malabry, France\\
\texttt{jiaqian.yu@centralesupelec.fr}
\And
Center for Processing Speech and Images\\
Departement Elektrotechniek, KU Leuven\\
3001 Leuven, Belgium\\
\texttt{matthew.blaschko@esat.kuleuven.be}
} ]

\begin{abstract}
Empirical risk minimization frequently employs convex surrogates to underlying discrete loss functions in order to achieve computational tractability during optimization. However, classical convex surrogates can only tightly bound modular loss functions, submodular functions or supermodular functions separately while maintaining polynomial time computation. In this work, a novel generic convex surrogate for general non-modular loss functions is introduced, which provides for the first time a tractable solution for loss functions that are neither supermodular nor submodular. This convex surrogate is based on a submodular-supermodular decomposition for which the existence and uniqueness is proven in this paper. It takes the sum of two convex surrogates that separately bound the supermodular component and the submodular component using slack-rescaling and the Lov\'{a}sz hinge, respectively. It is further proven that this surrogate is convex, piecewise linear, an extension of the loss function, and for which subgradient computation is polynomial time. Empirical results are reported on a non-submodular loss based on the S{\o}rensen-Dice difference function, and a real-world face track dataset with tens of thousands of frames, demonstrating the improved performance, efficiency, and scalability of the novel convex surrogate. 
\end{abstract}

\section{Introduction}

Many learning problems involve the simultaneous prediction of multiple labels.  A simple strategy is to empirically minimize the Hamming loss over the set of predictions~\cite{Taskar2003}.  However, this does not always reflect the underlying risk of the prediction process, and may lead to suboptimal performance.  Following the risk minimization principle~\cite{Vapnik:1995}, we may instead wish to minimize a loss function that more closely reflects the cost of a specific set of predictions.  Alternatives to the Hamming loss are frequently employed in the discriminative learning literature: \cite{cheng2010bayes} uses a rank loss which is supermodular; \cite{petterson2011submodular} uses a non-submodular loss based on F-score; \cite{doppa2014hc} uses modular losses e.g.\ Hamming loss and F1 loss  which is non-submodular; and losses that are nonmodular are common in a wide range of problems, including Jaccard index based losses~\cite{BlaschkoECCV2008,EveringhamIJCV2010,NowozinCVPR2014}, or more general submodular-supermodular objectives~\cite{narasimhan2005subsup}.

This has motivated us to study the conditions for a loss function to be tractably upper bounded with a tight convex surrogate.  For this, we make use of the discrete optimization literature, and in particular submodular analysis~\cite{fujishige2005submodular,Schrijver2003}.
Existing polynomial-time convex surrogates exist for supermodular~\cite{tsochantaridis2005large} or submodular losses~\cite{yuICML2015}, but not for more general non-modular losses.
We may perform approximate inference in polynomial time via a greedy optimization procedure to compute a subgradient or cutting plane of a convex surrogate for a general increasing function, but this leads to poor performance of the training procedure in practice~\cite{Finley:2008,Joachims/etal/09a}. A decomposition-based method for a general set function has been proposed in the literature~\cite{iyer2012algorithms}, showing that under certain conditions a decomposition into a submodular plus a supermodular function can be efficiently found. Other relevant work includes the hardness results on submodular Hamming optimization and its approximation algorithms~\cite{gillenwater2015nips}.

In this paper, we propose a novel convex surrogate for general non-modular loss functions, which is solvable for the first time for non-supermodular and non-submodular loss functions. 
In Section~\ref{sec:method}, we introduce the basic concepts used in this paper. In Section~\ref{sec:convexsurrogate}, we define a decomposition for a general non-modular loss function into supermodular and submodular components (Section~\ref{sec:decomposition}), propose a novel convex surrogate operator based on this decomposition (Section~\ref{sec:ConvexSurrogateDefinition}), and demonstrate that it is convex, piecewise linear, an extension of the loss function, and for which subgradient computation is polynomial time (Section~\ref{sec:ConvexSurrogateProperties}). In Section~\ref{sec:DiceTheory}, we introduce the S{\o}rensen-Dice loss, which is neither submodular nor supermodular. In Section~\ref{sec:result} we demonstrate the feasibility, efficiency and scalability of our convex surrogate with the S{\o}rensen-Dice loss on a synthetic problem, and a range of non-modular losses on a real-world face-track dataset comprising tens of thousands of video frames.

\section{Non-modular loss functions}\label{sec:method}

In empirical risk minimization for a set of binary predictions, %\newcorr{of size $p$},
we wish to minimize some functional of
\begin{equation}
\hat{\mathcal{R}}(h) = \frac{1}{n} \sum_{i=1}^n \Delta(y_i,\operatorname{sign}(h(x_i))) .
\end{equation}
%\newcorr{with $x_i\in\mathbb{R}^{d\times p}$ where $d$ is the length of the feature vector.}
For an arbitrary loss function 
$\Delta : \{-1, +1 \}^{p} \times \{-1, +1 \}^{p} \mapsto \mathbb{R}$,
we define a convex surrogate with an operator $\mathbf{B}$, 
\begin{equation}
\mathbf{B} \Delta : \{-1 , +1 \}^p \times \mathbb{R}^{p} \mapsto \mathbb{R} . \label{eq:ConvSurrogateOperatorSig}
\end{equation}
We may then minimize the empirical expectation of $\mathbf{B} \Delta(y,h(x))$ with respect to functions  $h : \mathcal{X} \mapsto \mathbb{R}^p$.  For well behaved function classes for $h$, minimization of the convex surrogate becomes tractable, provided that subgradient computation of $\mathbf{B} \Delta$ is efficiently solvable.

Any loss function $\Delta$ of this form may be interpreted as a set function where inclusion in a set is defined by a corresponding prediction being incorrect:
\begin{equation}\label{eq:delta_l}
\Delta(y,\tilde{y}) = l(\{i | y^i \neq \tilde{y}^i\}) 
\end{equation}
for some set function $l$.

In our analysis of convex surrogates for non-modular loss functions, we will employ several results for the Structured Output SVM~\cite{tsochantaridis2005large}, which assumes that a structured prediction is made by taking an inner product of a feature representation of inputs and outputs:
$\operatorname{sign}(h(x)) = \arg\max_y \left\langle w, \phi(x,y)\right\rangle$.
The slack rescaling variant of the Structured Output SVM is as follows:
\begin{align}
&\min_{w,\xi}  \frac{1}{2} \|w\|^2 + C \sum_{i=1}^{n} \xi_i,\quad \forall i,\forall \tilde{y}\in \mathcal{Y}: \label{eq:maxmargin}\\
&\langle w,\phi(x_i,y_i)\rangle
- \langle w,\phi(x_i,\tilde{y})\rangle \geq 1 - \frac{\xi_i}{\Delta(y_i,\tilde{y})} \label{eq:slackrescal}
\end{align}
In the sequel, we consider a feature function such that
$\langle w, \phi(x,y) \rangle = \sum_{j=1}^p \langle w^j , x^j \rangle y^j$. Each $w^j$ is then a vector of length $d$, and $w \in \mathbb{R}^{d\cdot p}$. Therefore $p$ individual prediction functions parametrized by $w^j$ are simultaneously optimized, although we may also consider cases in which we constrain $w^j=w^i\ \forall i,j$.  More generally, we may consider $h : \mathcal{X} \mapsto \mathbb{R}^p$, which may have non-linearities, e.g.\ deep neural networks.
 
\subsection{Mathematical preliminaries}
\begin{defin}
A set function $l$ maps from the powerset of some base set $V$ to the reals $l : \mathcal{P}(V) \mapsto \mathbb{R} $.
\end{defin}
\begin{defin}
A set function $l$ is non-negative if $l(A) - l(\emptyset) \geq 0, \ \forall A \subseteq V$.
\end{defin}
We denote the set of all such loss functions satisfying Equation~\eqref{eq:delta_l} $\mathcal{F}$.  Following standard conventions in submodular analysis, we assume that $l(\emptyset)=0$. In this paper we consider $l$ is non-negative, which we will denote $l \in \mathcal{F}_+$. 

\begin{defin}[Submodular function]\label{def:submodular}
%A set function $l:\mathcal{P}(V)\mapsto\mathbb{R}$ is submodular if and only if for all subsets $A,B\subseteq V$, $l(A)+l(B)\geq l(A\cup B)+l(A\cap B)$.
  A set function $l : \mathcal{P}(V) \mapsto \mathbb{R}$ is submodular iff for all $B\subseteq A \subset V$ and $x \in V \setminus A$,
\begin{equation}
    l(B \cup \{x\}) - l(B) \geq l(A \cup \{x\}) - l(A)
\end{equation}
\end{defin}
A function is \emph{supermodular} iff its negative is submodular, and a function is modular (e.g. Hamming loss) iff it is both submodular and supermodular. 
We denote the set of all submodular functions as $\mathcal{S}$, and the set of all supermodular functions as $\mathcal{G}$.

\begin{defin}
A set function $l$ is symmetric if $l(A) = c(|A|)$ for some function $c : \mathbb{Z}^{*} \mapsto \mathbb{R} $.
\end{defin}

\begin{pro}
A symmetric set function $l$ is submodular iff $c$ is concave~\cite[Proposition~6.1]{MAL-039}.
\end{pro}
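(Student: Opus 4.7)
The plan is to unfold both sides of the stated equivalence using the observation that for a symmetric function $l(A)=c(|A|)$, every ``marginal gain'' $l(A\cup\{x\})-l(A)$ depends only on $|A|$. Concretely, for any $A\subset V$ and $x\in V\setminus A$,
\begin{equation}
l(A\cup\{x\})-l(A)=c(|A|+1)-c(|A|),
\end{equation}
and similarly for $B$. Thus the submodular inequality of Definition~\ref{def:submodular} becomes
\begin{equation}
c(b+1)-c(b)\geq c(a+1)-c(a),\qquad b=|B|\leq|A|=a,
\end{equation}
which is exactly the condition that the first-difference sequence $k\mapsto c(k+1)-c(k)$ is non-increasing, i.e.\ $c$ is (discretely) concave on $\{0,1,\ldots,|V|\}$.

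For the forward direction ($l$ submodular $\Rightarrow$ $c$ concave), I would fix any integers $0\leq b\leq a<|V|$ and exhibit concrete witnesses: pick any $A\subseteq V$ with $|A|=a$, any $B\subseteq A$ with $|B|=b$, and any $x\in V\setminus A$ (such choices exist since $a<|V|$). Applying submodularity to this triple produces the required inequality $c(b+1)-c(b)\geq c(a+1)-c(a)$. Ranging over all admissible $b\leq a$ shows the first differences of $c$ are monotonically non-increasing, hence $c$ is concave.

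For the reverse direction ($c$ concave $\Rightarrow$ $l$ submodular), I would simply take arbitrary $B\subseteq A\subset V$ and $x\in V\setminus A$, rewrite both marginal gains via symmetry as $c(|B|+1)-c(|B|)$ and $c(|A|+1)-c(|A|)$, and invoke the non-increasing property of the first differences at indices $|B|\leq|A|$ to conclude submodularity.

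The main ``obstacle'' is purely bookkeeping: making sure one has enough room in $V$ to realize any pair of cardinalities $b\leq a<|V|$ by actual subsets $B\subseteq A$ together with an element $x\notin A$. This is immediate once one restricts attention to cardinalities strictly less than $|V|$, which is the only regime in which the submodularity condition makes an assertion. No deeper machinery is required; the proposition is essentially a restatement of submodularity for cardinality-based set functions.
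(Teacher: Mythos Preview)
Your argument is correct and is exactly the standard proof: for a cardinality-based function the marginal gain at $A$ depends only on $|A|$, so the diminishing-returns inequality of Definition~\ref{def:submodular} is equivalent to non-increasing first differences of $c$, i.e.\ discrete concavity. The bookkeeping you flag (realizing any pair $b\le a<|V|$ by nested sets with an available element $x\notin A$) is handled correctly.

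There is nothing to compare against in the paper itself: the proposition is not proved here but is quoted as a known fact with a citation to \cite[Proposition~6.1]{MAL-039}. Your self-contained derivation is precisely the elementary argument one would expect that reference to contain.
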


\begin{defin}[Increasing function]\label{def:increasing}
A set function $l:\mathcal{P}(V)\mapsto\mathbb{R}$ is increasing if and only if for all subsets $A\subset V$ and elements $x \in V \setminus A$, $l(A) \leq l(A \cup \{x\})$.
\end{defin}
We note that the set of increasing supermodular functions is identical to $\mathcal{G}_{+}$. We will propose a convex surrogate operator for a general non-negative loss function, based on the fact that set functions can always be expressed as the sum of a submodular function and a supermodular function:
\begin{pro}\label{pro:decompositionExists}
For all set functions $l$,  there always exists a decomposition into the sum of a submodular function $f\in\mathcal{S}$ and a supermodular function $g\in\mathcal{G}$:
\begin{equation}\label{eq:decomposition}
l = f +g 
\end{equation}
\end{pro}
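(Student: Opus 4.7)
The plan is a direct constructive proof: I exhibit a canonical strictly supermodular ``padding'' function $s$ and show that for $\alpha$ sufficiently large, $f := l - \alpha s$ and $g := \alpha s$ form a valid submodular--supermodular decomposition of $l$. Concretely, I would take $s(A) := \binom{|A|}{2}$, whose marginal $s(A\cup\{x\}) - s(A) = |A|$ is nondecreasing along any chain $B \subseteq A$; hence $s \in \mathcal{G}$, and in fact $s$ has a uniform strict-supermodularity gap of $|A|-|B| \geq 1$ whenever $B \subsetneq A$.

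Next, I would unfold what $f = l - \alpha s \in \mathcal{S}$ demands. Writing Definition~\ref{def:submodular} for $f$ and cancelling the $\alpha s$ contribution on both sides, the requirement collapses to
\begin{equation*}
\alpha \bigl(|A| - |B|\bigr) \;\geq\; \bigl[l(A\cup\{x\}) - l(A)\bigr] - \bigl[l(B\cup\{x\}) - l(B)\bigr]
\end{equation*}
for every $B \subseteq A \subset V$ and every $x \in V\setminus A$. When $B = A$ both sides vanish and the inequality is automatic; when $B \subsetneq A$ the denominator $|A|-|B|$ is a positive integer and the right-hand side is uniformly bounded over the finite powerset $\mathcal{P}(V)$. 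Setting
\begin{equation*}
\alpha^* := \max\!\Bigl(0,\; \max_{B \subsetneq A \subset V,\, x \notin A} \tfrac{l(A\cup\{x\}) - l(A) - l(B\cup\{x\}) + l(B)}{|A|-|B|}\Bigr)
\end{equation*}
yields a finite real number, and any $\alpha \geq \alpha^*$ makes $f = l - \alpha s$ submodular. Since $g = \alpha s$ is supermodular by construction and $f + g = l$ by definition, this exhibits the desired decomposition.

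I do not anticipate a substantive obstacle: the argument rests on the standard ``add enough supermodular curvature to absorb a bounded failure of submodularity'' move, and finiteness of $\mathcal{P}(V)$ makes the worst-case ratio $\alpha^*$ automatically finite. The only places requiring care are keeping sign conventions consistent when expanding $f = l - \alpha s$, and isolating the degenerate case $B = A$ where both sides of the inequality vanish simultaneously. Since Proposition~\ref{pro:decompositionExists} asserts existence only, no further argument (in particular no uniqueness or canonicity claim) needs to be addressed here.
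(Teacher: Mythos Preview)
Your proof is correct. The paper does not actually supply its own argument for this proposition; it simply cites~\cite[Lemma~4]{narasimhan2005subsup}. Your constructive approach---padding $l$ by a sufficiently large multiple of the strictly supermodular function $s(A)=\binom{|A|}{2}$ so that $l-\alpha s$ becomes submodular---is exactly the standard device used in that reference, so in substance you have reproduced the cited proof rather than taken a different route.
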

A proof of this proposition is given in~\cite[Lemma~4]{narasimhan2005subsup}.
\begin{pro}\label{pro:decompositionIncreasingExists}
For an arbitrary decomposition $l=f+g$ where $g$ is not increasing, there exists a modular function $m_g$ s.t.
\begin{equation}
l=(f-m_g)+(g+m_g)
\end{equation}
with $\tilde{f}:=f-m_g\in\mathcal{S}$, and $\tilde{g}:=g+m_g\in\mathcal{G}_+$ is increasing.
\end{pro}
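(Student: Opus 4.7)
The plan is to construct $m_g$ explicitly from the singleton marginals of $g$, using the key fact that adding a modular function to a submodular (resp.\ supermodular) function preserves submodularity (resp.\ supermodularity), so the only nontrivial task is to ensure that $g+m_g$ is increasing.

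First I would assume WLOG that $g(\emptyset)=0$, absorbing any constant into $f$; this does not affect sub/supermodularity. I would then define $m_g$ by its values on singletons,
\begin{equation}
m_g(\{x\}) := \max\bigl(0,\, -g(\{x\})\bigr),\qquad x \in V,
\end{equation}
and extend it modularly via $m_g(A) := \sum_{x \in A} m_g(\{x\})$, so $m_g(\emptyset)=0$ and $m_g(A\cup\{x\})-m_g(A)=m_g(\{x\})$ for $x\notin A$. The algebraic identity $l=(f-m_g)+(g+m_g)$ is then trivial. Since subtracting the modular $m_g$ from a submodular function preserves submodularity, $\tilde f := f - m_g \in \mathcal{S}$, and analogously $\tilde g := g + m_g \in \mathcal{G}$.

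The crux is to verify that $\tilde g$ is increasing. This is where I would use supermodularity of $g$: by Definition~\ref{def:submodular} applied to $-g$, the marginal gain $g(A\cup\{x\})-g(A)$ is nondecreasing in $A$ for fixed $x\notin A$. In particular, for every $A\not\ni x$,
\begin{equation}
g(A\cup\{x\}) - g(A) \;\geq\; g(\{x\}) - g(\emptyset) \;=\; g(\{x\}).
\end{equation}
Consequently,
\begin{equation}
\tilde g(A\cup\{x\}) - \tilde g(A) \;\geq\; g(\{x\}) + m_g(\{x\}) \;=\; g(\{x\}) + \max(0,-g(\{x\})) \;\geq\; 0,
\end{equation}
so $\tilde g$ is increasing. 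Since $\tilde g(\emptyset) = g(\emptyset) + m_g(\emptyset) = 0$, monotonicity gives $\tilde g \geq 0$, and combined with $\tilde g \in \mathcal{G}$, this yields $\tilde g \in \mathcal{G}_+$ as required.

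I do not expect a serious obstacle here; the only point that deserves care is the monotonicity step, where one must recognize that for a supermodular function the smallest marginal of any element $x$ over all sets not containing $x$ is achieved at the empty set, so that the single "worst case" $-g(\{x\})$ is enough to choose $m_g(\{x\})$ and make every marginal nonnegative. The remaining preservation of submodularity/supermodularity under modular shifts and the recovery of non-negativity from monotonicity and $\tilde g(\emptyset)=0$ are routine.
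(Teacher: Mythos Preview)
Your proof is correct and follows essentially the same construction as the paper. The paper sets $w_j = -\min_{A\subseteq V}\bigl[g(A\cup\{j\})-g(A)\bigr]$ and simply asserts that $g+m_g$ is then increasing; you instead invoke supermodularity of $g$ to observe that this minimum marginal is attained at $A=\emptyset$, which yields your explicit choice $m_g(\{x\})=\max(0,-g(\{x\}))$, and you spell out the monotonicity, non-negativity, and preservation-of-(sub/super)modularity steps that the paper leaves implicit.
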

\begin{proof}
Any modular function can be written as
\begin{equation}\label{eq:modularIsLinear}
m_g(A) = \sum_{j \in A} w_j
\end{equation}
for some coefficient vector $w \in \mathbb{R}^{|V|}$.
For each $j \in V$, we may set
\begin{equation}\label{eq:ModularTransformIncreasing}
w_j = - \min_{A \subseteq V} g(A \cup \{j\}) - g(A) .
\end{equation}
The resulting modular function will ensure that $g+m_g$ is increasing following Definition~\ref{def:increasing}.
\end{proof}
This proof indicates that a decomposition $l = f+g$ is not-unique due to a modular factor.  We subsequently demonstrate that decompositions can vary by more than a modular factor:
\begin{pro}[Non-uniqueness of decomposition up to modular transformations.]\label{pro:nonUniqueNonModular}
For any set function, there exist multiple decompositions into submodular and supermodular components such that these components differ by more than a modular factor:
\begin{align}
&\exists f_1,f_2 \in \mathcal{S}, \ g_1,g_2 \in \mathcal{G}\nonumber\\
&\!\!\! \left( l = f_1+g_1 = f_2 + g_2 \right)\wedge\left(g_1 + m_{g_1} \neq g_2 + m_{g_2} \right)
\end{align}
where $\wedge$ denotes ``logical and,'' $m_{g_1}$ and $m_{g_2}$ are constructed as in Equations~\eqref{eq:modularIsLinear} and~\eqref{eq:ModularTransformIncreasing}.
\end{pro}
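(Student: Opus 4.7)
The plan is to construct two different decompositions explicitly by shifting submodular mass between the two components of a given decomposition, using a strictly submodular function as the "shift." Starting from any decomposition $l = f_1 + g_1$ guaranteed by Proposition~\ref{pro:decompositionExists}, the idea is to pick a function $s$ that is submodular but \emph{not} modular (assuming $|V|\geq 2$, which can be taken as standing assumption since the claim is vacuous otherwise). A simple candidate is $s(A) := \min(1,|A|)$, which is strictly concave in $|A|$, hence strictly submodular by the proposition relating symmetric submodular functions to concavity.

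Next, I would define
\begin{equation}
f_2 := f_1 + s, \qquad g_2 := g_1 - s .
\end{equation}
Since submodularity and supermodularity are each preserved under addition, and since $-s\in\mathcal{G}$, we get $f_2 \in \mathcal{S}$ and $g_2 \in \mathcal{G}$, while obviously $f_2 + g_2 = f_1 + g_1 = l$. This produces the required second decomposition.

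It then remains to verify that $g_1 + m_{g_1} \neq g_2 + m_{g_2}$ for the specific modular corrections $m_{g_1}, m_{g_2}$ built via Equations~\eqref{eq:modularIsLinear}--\eqref{eq:ModularTransformIncreasing}. Computing,
\begin{equation}
(g_1 + m_{g_1}) - (g_2 + m_{g_2}) = s + (m_{g_1} - m_{g_2}) .
\end{equation}
The difference $m_{g_1}-m_{g_2}$ is modular, so if the right-hand side vanished as a set function, $s$ would equal a modular function, contradicting the strict submodularity of $s$. Hence the two increasing supermodular components differ, establishing the proposition.

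The argument is essentially a transfer-of-mass trick and I do not foresee a substantial obstacle; the only point requiring any care is confirming that $s$ is genuinely non-modular on the ground set in question, which is why exhibiting an explicit example such as $s(A)=\min(1,|A|)$ (equivalently, the indicator of non-emptiness) is useful: it is strictly concave in cardinality on any $V$ with $|V|\geq 2$, thus strictly submodular, and no specific knowledge of $l$, $f_1$, or $g_1$ is needed. This keeps the construction uniform across all $l\in\mathcal{F}$.
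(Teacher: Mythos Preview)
Your proposal is correct and follows essentially the same route as the paper: start from an existing decomposition, shift by a strictly submodular (non-modular) function $s$ to get $f_2=f_1+s$, $g_2=g_1-s$, and then observe that the two supermodular parts cannot differ by a modular function because $s$ is not modular. Your write-up is in fact slightly more careful than the paper's, since you exhibit a concrete $s(A)=\min(1,|A|)$, note the implicit $|V|\ge 2$ assumption, and spell out the final step via $(g_1+m_{g_1})-(g_2+m_{g_2})=s+(m_{g_1}-m_{g_2})$.
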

\begin{proof}
Let $m$ be a submodular function that is not modular.  For a given decomposition $l = f_1 + g_1$, we may construct $f_2 := f_1 + m$ and $g_2 := g_1 - m$.  As $m$ is not modular, there is no modular $m_{1}$ such that $g_1 - m_{1} = g_1 - m = g_2$.
\end{proof}

\section{A convex surrogate for general non-modular losses}\label{sec:convexsurrogate}

We will show in this section the unique decomposition for a general non-negative loss starting from any arbitrary submodular-supermodular decomposition, which allows us to define a convex surrogate operator based on such a canonical decomposition.
%In this paper, we focus on the study of an arbitrary non-negative, \emph{non-submodular} and \emph{non-supermodular} loss function $\Delta$ i.e. $l\notin \mathcal{S}\cup\mathcal{G}$. 

\subsection{A canonical decomposition}\label{sec:decomposition}

In this section, we define an operator $\mathbf{D}$ such that $g^*:=\mathbf{D}l\in\mathcal{G}_+$ is unique and $f^*:=l-\mathbf{D}l\in\mathcal{S}$ is then unique.
We have demonstrated in the previous section that we may consider there to be two sources of non-uniqueness in the decomposition $l = f+g$: a modular component and a non-modular component related to the curvature of $g$ (respectively $f$).  We define $\mathbf{D}$ such that these two sources of non-uniqueness are resolved using a canonical decomposition $l = f^* + g^*$.

\begin{defin}\label{def:decomposition}
We define an operator $\mathbf{D} : \mathcal{F} \mapsto \mathcal{G}_+$ as
\begin{equation}\label{eq:defD}
\mathbf{D} l = \arg\min_{g \in \mathcal{G}_+} \sum_{A \subseteq V} g(A), \quad \text{s.t. } l-g \in \mathcal{S} .
\end{equation}
\end{defin}

We note that minimizing the values of $g$ will simultaneously remove the non-uniqueness due both to the modular non-uniqueness described in Proposition~\ref{pro:decompositionIncreasingExists}, as well as the non-modular non-uniqueness described in Proposition~\ref{pro:nonUniqueNonModular}.  We formally prove this in Proposition~\ref{pro:uniquenessOfDl}.

\begin{pro}\label{pro:uniquenessOfDl}
$\mathbf{D} l$ is unique for all $l \in \mathcal{F}$ that have a finite base set $V$.
\end{pro}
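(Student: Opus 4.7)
The plan is to view the definition of $\mathbf{D}l$ as a linear program in the finite-dimensional space of set functions on $V$ with $g(\emptyset)=0$. The constraints $g\in\mathcal{G}_+$ and $l-g\in\mathcal{S}$ each unfold into finitely many linear inequalities in the values $g(A)$ (non-negativity, together with the pairwise second-difference inequalities from Definition~\ref{def:submodular} applied to $g$ and to $l-g$), while the objective $\sum_{A\subseteq V}g(A)$ is linear. Existence of a minimizer is immediate: the feasible set is non-empty by combining Propositions~\ref{pro:decompositionExists} and~\ref{pro:decompositionIncreasingExists}, and the objective is bounded below by $0$ since every feasible $g$ is non-negative.

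Uniqueness then proceeds by addressing the two sources of non-uniqueness of Propositions~\ref{pro:decompositionIncreasingExists} and~\ref{pro:nonUniqueNonModular} in turn. The modular ambiguity is removed by showing that any optimum $g^*$ must satisfy $g^*(\{i\})=0$ for every $i\in V$: if $g^*(\{i\})=c>0$, the modular function $m(A)=c\cdot\mathbf{1}[i\in A]$ satisfies $g^*\ge m$ pointwise, because iterated supermodularity applied to $g^*(\emptyset)=0$ gives $g^*(A)\ge\sum_{j\in A}g^*(\{j\})$; hence $g^*-m\in\mathcal{G}_+$, and $l-(g^*-m)=(l-g^*)+m\in\mathcal{S}$ since modular shifts preserve submodularity; but $\sum_A(g^*-m)(A)=\sum_A g^*(A)-c\cdot 2^{|V|-1}$, a strict improvement, contradicting optimality.

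For the remaining non-modular ambiguity I would proceed by induction on $|A|$. Assuming $g^*$ is uniquely pinned down on every $B$ with $|B|<k$, the lower bounds on $g^*(A)$ arising from non-negativity, from supermodularity, and from submodularity of $l-g$ at second differences with $A$ occupying the largest position depend only on those smaller values and therefore yield a common lower bound shared by every optimum. The main obstacle is that $g^*(A)$ also participates in constraints where $A$ sits in the middle or smallest position of a second difference over a strictly larger set, so the feasible set is not closed under pointwise minimum and a direct meet-semilattice argument fails. To overcome this I would construct a candidate $g^\circ$ bottom-up by setting $g^\circ(\emptyset)=0$ and, for each $A$ with $|A|=k\ge 1$, taking $g^\circ(A)$ to be the maximum of $0$ and of all lower bounds induced by supermodularity and by $l-g$ submodularity at pairs $i,j\in A$ using the already-defined values of $g^\circ$ on strictly smaller sets. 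Verifying by induction that $g^\circ$ is feasible is routine because every pairwise second-difference inequality becomes active when its largest set is processed, and complementary slackness for the LP then forces those lower bounds to be tight at every optimum: any slack $g^*(A)>g^\circ(A)$ could be absorbed by decreasing $g^*(A)$, with compensating increments at strictly larger sets when necessary, yielding a strict decrease in $\sum_A g^*(A)$ and contradicting optimality. This gives $g^*=g^\circ$, so the optimum is unique.
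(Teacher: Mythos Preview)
Your route differs from the paper's. Both begin by casting Definition~\ref{def:decomposition} as a finite linear program, but the paper then invokes an LP-uniqueness criterion of Mangasarian: the LP $\min_x r^Tx$ subject to $Cx\ge q$ has a unique solution provided no nonzero $y$ satisfies $C_Jy\ge 0$ and $r^Ty\le 0$, where $J$ is the active set at an optimum. The paper argues directly that, with $r=\mathbf 1$ and componentwise lower bounds among the constraints, no such $y$ exists. This avoids constructing the minimizer at all.

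Your modular step (forcing $g^*(\{i\})=0$) is correct, and your $g^\circ$ is indeed feasible: every second-difference inequality is enforced as a lower bound at the moment its largest set is processed. The gap is in proving that every optimum $g^*$ coincides with $g^\circ$. The supermodularity lower bound $g(A)\ge g(A\setminus\{i\})+g(A\setminus\{j\})-g(A\setminus\{i,j\})$ carries a \emph{negative} coefficient on $g(A\setminus\{i,j\})$, so the induction ``$g^*\ge g^\circ$ on smaller sets $\Rightarrow g^*(A)\ge g^\circ(A)$'' does not propagate once $|A|\ge 4$; you would need \emph{equality} two layers down, which has not yet been established. Your remedy of decreasing $g^*(A)$ with ``compensating increments at strictly larger sets'' is exactly where the argument breaks: decreasing $g^*(A)$ can violate supermodularity constraints in which $A$ is the \emph{smallest} set (e.g.\ $g(A\cup\{k,l\})+g(A)\ge g(A\cup\{k\})+g(A\cup\{l\})$), and restoring feasibility by raising $g^*(A\cup\{k,l\})$ can itself cascade upward. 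You give no accounting that the net change in $\sum_A g(A)$ is strictly negative, and the appeal to complementary slackness does not supply one (complementary slackness relates primal constraints to dual variables, not to tightness of particular primal inequalities at the optimum). If it could be completed, your approach would yield an explicit layerwise formula for $\mathbf{D}l$, which the paper's abstract argument does not provide; but as written, the final paragraph is a programme rather than a proof.
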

\begin{proof}
We note that the $\arg\min$ in Equation~\eqref{eq:defD} is equivalent to a linear program: $g$ is uniquely determined by a vector in $\mathbb{R}^{2^{|V|}-1}$ the coefficients of which correspond to $g(A)$ for all $A \in \mathcal{P}(V) \setminus \emptyset$, and we wish to minimize the sum of the entries subject to a set of linear constraints enforcing supermodularity of $g$, non-negativity of $g$, and submodularity of $l-g$.

From~\cite[Theorem~2]{Mangasarian1979151}, an LP of the form
\begin{eqnarray}
\min_{x \in \mathbb{R}^{d}} & r^T x\\
\text{s.t. } & Cx \geq q
\end{eqnarray}
has a unique solution if there is no $y \in \mathbb{R}^d$ simultaneously satisfying
\begin{equation}\label{eq:LPuniqueConstraints}
C_J y \geq 0, \quad r^T y \leq 0, \quad y \neq 0
\end{equation}
where $J = \{i | C_i x^* = q_i \}$ is the active set of constraints at an optimum $x^*$.  We note that as $r$ is a vector of all ones (cf.\ Equation~\eqref{eq:defD}), $r^T y \leq 0$ constrains $y$ to lie in the non-positive orthant.  However, as the linear program is minimizing the sum of $x$ subject to lower bounds on each entry of $x$ (e.g.\ positivity constraints), we know that $C_J y \geq 0$ will bound $y$ to lie in the non-negative orthant.  This means, at most, these constraints overlap at $y=0$, but this is expressly forbidden by the last condition in Equation~\eqref{eq:LPuniqueConstraints}.
\end{proof}

Although Equation~\eqref{eq:defD} is a linear programming problem, we do not consider this definition to be constructive in general as the size of the problem is exponential in $|V|$ (see \cite{iyer2012algorithms}).  However, it may be possible to verify that a given decomposition satisfies this definition for some loss functions of interest.  Furthermore, for some classes of set functions, the LP has lower complexity, e.g.\ for symmetric set functions the resulting LP is of linear size, and loss functions that depend only on the number of false positives and false negatives (such as the S{\o}rensen-Dice loss discussed in Section~\ref{sec:DiceTheory}) result in a LP of quadratic size.

\iffalse
\begin{pro}
If there exists a decomposition of a set function $l = f+g$ with $f \in \mathcal{S}_+$ and $g\in \mathcal{G}_+$, then $f^* = l - \mathbf{D}l \in \mathcal{S}_+$.
\end{pro}
\begin{proof}
{\color{red} Is this true?  If so, it would be nice to say it.}

For any decomposition $l = f+g$
\begin{equation}
\sum_{A\subseteq V}  f(A) = \sum_{A\subseteq V} l(A) - \sum_{A\subseteq V} g(A)
\end{equation}
By Definition~\ref{def:decomposition}, 
\begin{equation}
g^*  = \arg\min_{g \in \mathcal{G}_+} \sum_{A \subseteq V} g(A) \implies f^* = \arg\max_{f \in \mathcal{S}} \sum_{A \subseteq V} f(A)
\end{equation}
If there exists $f \in \mathcal{S}_+$, i.e.\ $f(A)\geq 0,\ \forall A\subseteq V$, we then have
\begin{equation}
\sum_{A \subseteq V} f^*(A) \geq \sum_{A \subseteq V} f(A) \geq 0
\end{equation}
which means {\color{red}	CANNOT} guarantee that $f^*(A) \geq 0,\ \forall A\subseteq V$
\end{proof}

\begin{pro}
If there exists a decomposition of a set function $l = f+g$ with $f \in \mathcal{S}_+$ increasing and $g\in \mathcal{G}_+$, then $f^* = l - \mathbf{D}l \in \mathcal{S}_+$ and is increasing.
\end{pro}
\begin{proof}
{\color{red} Is this true?  If so, it would be nice to say it.}
\end{proof}
\fi

We finally note that from Equation~\eqref{eq:delta_l}, for every $\Delta(y, \cdot)$ we may consider its equivalence to a set function $l = g^* + f^*$, and denote the resulting decomposition of 
\begin{equation}\label{eq:LossDecompositionSupSub}
\Delta(y,\cdot) = \Delta_{\mathcal{G}}(y,\cdot) + \Delta_{\mathcal{S}}(y,\cdot)
\end{equation}
into its supermodular and submodular components, respectively.\footnote{Note that $\Delta_S$ and $\Delta_G$ are due to Eq.~\eqref{eq:delta_l} for $f^*$ and $g^*$ which explicitly depend on $\mathbf{D}$. For simplicity of notation, we will use $\Delta_{\mathcal{G}}$ instead of $\Delta_{\mathbf{D}\mathcal{G}}$}

\subsection{Definition of the convex surrogate}\label{sec:ConvexSurrogateDefinition}

Now that we have defined a unique decomposition $l = g^* + f^*$, we will use this decomposition to construct a surrogate $\mathbf{B} \Delta$ that is convex, piecewise linear, an extension of $\Delta$, and for which subgradient computation is polynomial time.  We construct a surrogate $\mathbf{B}$ by taking the sum of two convex surrogates applied to $\Delta_{\mathcal{G}}$ and $\Delta_{\mathcal{S}}$ independently.  These surrogates are slack-rescaling~\cite{tsochantaridis2005large} applied to $\Delta_{\mathcal{G}}$ and the Lov\'{a}sz hinge~\cite{yuICML2015} applied to $\Delta_{\mathcal{S}}$.
\begin{defin}[Slack-rescaling operator~\cite{yuICML2015}]\label{def:slackrescaling}
The slack-rescaling operator $\mathbf{S}$ is defined as:
\begin{equation}\label{eq:slackrescaling}
\mathbf{S}\Delta(y,h(x)):=\max_{\tilde{y} \in \mathcal{Y}}\Delta(y,\tilde{y})\left(1+ \langle h(x),\tilde{y} \rangle -\langle h(x),y \rangle \right).
\end{equation}
\end{defin}

The Lov\'{a}sz hinge of a submodular function builds on the Lov\'{a}sz extension~\cite{lovasz1983submodular}:
\begin{defin}[Lov\'{a}sz hinge~\cite{yuICML2015}]\label{def:lovaszHing}
The Lov\'{a}sz hinge, $\mathbf{L}$, is defined as the unique operator such that, for a submodular set function $l$ related to $\Delta$ as in Eq.~\eqref{eq:delta_l}:
\begin{align}
&\mathbf{L} \Delta(y,h(x)) := \nonumber\\
&\left(\max_{\pi}\sum_{j=1}^p s^{\pi_j} \left( l\left(\{\pi_1,\cdots,\pi_j\}\right)- l\left(\{\pi_1,\cdots,\pi_{j-1}\}\right)\right)\right)_+
\end{align}
where $(\cdot)_{+} = \max(\cdot,0)$, $\pi$ is a permutation,
\begin{equation}\label{eq:s_i_pi}
s^{\pi_j} = 1-h^{\pi_j}(x) y^{\pi_j},
\end{equation}
and $h^{\pi_j}(x)$ is the $\pi_j$th dimension of $h(x)$.
\end{defin}

\begin{defin}[General non-modular convex surrogate]
For an arbitrary non-negative loss function $\Delta$, we define 
\begin{equation}\label{eq:operatorB}
\mathbf{B}_{\mathbf{D}} \Delta := \mathbf{L} \Delta_{\mathcal{S}} + \mathbf{S} \Delta_{\mathcal{G}}
\end{equation}
where $\Delta_{\mathcal{S}}$ and $\Delta_{\mathcal{G}}$ are as in Equation~\eqref{eq:LossDecompositionSupSub}, and $\mathbf{D}$ is the decomposition of $l$ defined by Definition~\ref{def:decomposition}. %$\mathbf{L}$ is the operator for the Lov\'{a}sz hinge~\cite{yuICML2015} and $\mathbf{S}$ is slack-rescaling~\cite{tsochantaridis2005large, yuICML2015}.
\end{defin}

\mycorr{We use a cutting plane algorithm to solve the max-margin problem as shown in Algorithm~\ref{alg:cuttingplane}.}
\begin{algorithm}[!t]
\caption{Cutting plane algorithm }\label{alg:cuttingplane}
\begin{algorithmic}[1]
\STATE {Input: $(x_1,y_1),\cdots,(x_n,y_n),C,\epsilon$}
\STATE $S^i = \emptyset, \forall i = 1,\cdots,n$
\REPEAT 
\STATE for $i=1,\cdots,n$ do
%\STATE $H(y_i) =  \Delta(y_i,\tilde{y}) ( 1 + h(x,\tilde{y}) - h(x,y_i))$
%\STATE $\hat{y} = \arg\max_{\tilde{y}} H(y_i)$ \% most violated constraint %by permutation all possible $\tilde{y}$
\mycorr{
\STATE $ \hat{y}_L = \arg\max_{\tilde{y}} H_L(y_i) =  \arg\max_{\tilde{y}} \mathbf{L} \Delta_{\mathcal{S}} $
\STATE $ \hat{y}_S = \arg\max_{\tilde{y}} H_S(y_i) =  \arg\max_{\tilde{y}} \mathbf{S} \Delta_{\mathcal{G}} $
\STATE $ H(\hat{y}) =  H_L(\hat{y}_L) + H_S(\hat{y}_S)$ %\% sum of the two cutting planes
\STATE $\xi^i = \max\{0,H(y_i)\}$
}
\IF {$H(\hat{y})>\xi^i+\epsilon$ }
\STATE $S^i:=S^i\cup \{y_i\}$
\STATE $w\leftarrow$ optimize Equation~\eqref{eq:maxmargin} with constraints defined by $\cup_i S^i$
\ENDIF
\UNTIL{no $S^i$ has changed during an iteration}
\RETURN $(w,\xi)$
\end{algorithmic}  
\end{algorithm}

\subsection{Properties of $\mathbf{B}_{\mathbf{D}}$}\label{sec:ConvexSurrogateProperties}

In the remainder of this section, we show that $\mathbf{B}_{\mathbf{D}}$ has many desirable properties.  Specifically, we show that $\mathbf{B}_{\mathbf{D}}$ is closer to the convex closure of the loss function than slack rescaling and that it generalizes the Lov\'{a}sz hinge (Theorems~\ref{theo:decompositionsupermodular} and~\ref{theo:decompositionsubmodular}).  Furthermore, we formally show that $\mathbf{B}_{\mathbf{D}} \Delta$ is convex (Theorem~\ref{theo:convex}), an extenstion of $\Delta$ for a general class of loss functions (Theorem~\ref{theo:extension}), and polynomial time computable (Theorem~\ref{theo:polytime}).

\iffalse
Nice properties of $\mathbf{B}_{\mathbf{D}}$ that we can prove
\begin{itemize}
\item if $l \in \mathcal{G}_+$ then $\mathbf{B}_{\mathbf{D}} \Delta = \mathbf{S} \Delta$.
\item if $l \in \mathcal{S}_+$ then $\mathbf{B}_{\mathbf{D}} \Delta = \mathbf{L} \Delta$.
\item $\mathbf{B}_{\mathbf{D}} \Delta$ is convex
\item $\mathbf{B}_{\mathbf{D}} \Delta$ is an extension of $\Delta$ for arbitrary $\Delta$
\item subgradient computation of $\mathbf{B}_{\mathbf{D}} \Delta$ is polynomial time given polynomial time oracle access to $f^*$ and $g^*$
\item $\mathbf{B}_{\mathbf{D}} \Delta$ has "nice behavior" in the negative orthant if $f^*$ is non-negative.
\end{itemize}
so we will prove all of these.
\fi

\begin{lemma}\label{lemma:decompositionsuper1}
If $l\in\mathcal{G}$, then $f^*:=l-\mathbf{D}l \in \mathcal{S}\cap\mathcal{G}$ i.e.\ modular.
\end{lemma}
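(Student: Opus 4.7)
The plan is to exhibit an explicit decomposition and then invoke the uniqueness of $\mathbf{D}l$ (Proposition~\ref{pro:uniquenessOfDl}) to identify it as the canonical one. Define the modular function $m^*(A) := \sum_{j \in A} l(\{j\})$ and set $g_0 := l - m^*$. First I would verify that $g_0$ is feasible for the program~\eqref{eq:defD}: $g_0 \in \mathcal{G}$ since subtracting a modular function from a supermodular one preserves supermodularity, $l - g_0 = m^*$ lies in $\mathcal{S}$ because every modular function is both sub- and supermodular, and $g_0 \geq 0$ because supermodularity of $l$ combined with $l(\emptyset) = 0$ implies superadditivity (apply $l(A \cup B) + l(A \cap B) \geq l(A) + l(B)$ to disjoint $A,B$ and iterate), so that $l(A) \geq \sum_{j \in A} l(\{j\}) = m^*(A)$ for every $A \subseteq V$.

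The main step is to show that $g_0$ actually attains the minimum in~\eqref{eq:defD}. Equivalently, writing any feasible point as $g = l - f$ with $f \in \mathcal{S}$, I would show $\sum_{A \subseteq V} f(A) \leq \sum_{A \subseteq V} m^*(A)$. The boundary condition $f(\emptyset) = 0$ is already built in: the LP in the proof of Proposition~\ref{pro:uniquenessOfDl} is indexed over the non-empty subsets, so $g(\emptyset) = 0$, and together with the standing convention $l(\emptyset) = 0$ this forces $f(\emptyset) = 0$. Submodularity of $f$ with $f(\emptyset) = 0$ then yields the standard subadditivity bound $f(A) \leq \sum_{j \in A} f(\{j\})$, either directly from the disjoint version of the submodular inequality or by telescoping the diminishing-returns bound $f(A \cup \{x\}) - f(A) \leq f(\{x\})$. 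The non-negativity constraint $g \geq 0$ evaluated on singletons gives $f(\{j\}) \leq l(\{j\})$, whence $f(A) \leq \sum_{j \in A} l(\{j\}) = m^*(A)$ pointwise, which implies the desired sum inequality.

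By uniqueness of $\mathbf{D}l$, this forces $\mathbf{D}l = g_0$ and hence $f^* = l - \mathbf{D}l = m^*$, which is modular and therefore lies in $\mathcal{S} \cap \mathcal{G}$.

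The main obstacle I anticipate is simply keeping straight which direction of each modular/super/sub inequality is in play: superadditivity of $l$ is needed for feasibility of $g_0$, while subadditivity of $f$ together with the singleton constraints is needed for pointwise domination by $m^*$. Both rely on the boundary condition $l(\emptyset) = f(\emptyset) = g(\emptyset) = 0$, and it is the simultaneous use of these two ``opposite'' inequalities that forces $f^*$ to be modular rather than only submodular.
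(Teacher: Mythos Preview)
Your proof is correct and follows essentially the same line as the paper's: both exhibit the candidate $g_0(A) = l(A) - \sum_{j \in A} l(\{j\})$ and argue that any feasible $g$ has $\sum_A g(A) \geq \sum_A g_0(A)$ via subadditivity of the submodular part $f = l - g$. Your version is in fact slightly more careful, since you explicitly verify $g_0 \in \mathcal{G}_+$ via superadditivity of $l$ and explicitly invoke the non-negativity constraint on singletons to obtain $f(\{j\}) \leq l(\{j\})$, a step the paper's argument needs but leaves implicit.
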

\begin{proof}
First we set $l=g_m+f_m$ where $f_m$ is modular and $f_m(\{j\}) = l(\{j\})$. Then any subset $S\subseteq V$
\begin{align}
& f_m(S) = \sum_{j\in S} l(\{j\}), \  g_m(S) = l(S) - \sum_{j\in S} l(\{j\}) \nonumber \\
& \sum_{S\subseteq V} g_m(S) = \sum_{S\subseteq V} \left( l(S) - \sum_{j\in S} l(\{j\}) \right) .
\label{eq:decompositionsuper1}
\end{align}
The sum in Equation~\eqref{eq:decompositionsuper1} is precisely the sum that should be minimized in Equation~\eqref{eq:defD}.  We now show that this sum cannot be minimized further while allowing $f_m$ to be non-modular.
If there exists any $g$ s.t.\ $f:=l-g$ is submodular but not modular, by definition there exists at least one subset $S_s\subseteq V$ and one $j\in S_s$ such that
\begin{equation}
 f(S_s\setminus\{j\})+f(\{j\}) > f(S_s)+f(\emptyset) .
\end{equation}
Then by subtracting each time one element from the subset $S_s$, we have
\begin{align}
f(S_s) & < f(S_s\setminus\{j\})+f(\{j\}) \nonumber\\
& \leq f(S_s\setminus(\{j\}\cup\{k\}))+f(\{k\})+f(\{j\})\nonumber \\
%&\qquad\nonumber\\
& \leq\cdots\leq\sum_{j\in S_s} f(\{j\})),  \qquad \forall k\in S_s\setminus(\{j\} 
\end{align}
which implies
\begin{equation}
g(S_s) > l(S_s) - \sum_{j\in S_s} l(\{j\})\label{eq:decompositionsuper2}
\end{equation}
By taking the sum of the inequalities as in Equation~\ref{eq:decompositionsuper2} for all subsets $S$, we have that 
\begin{equation*}
\sum_{S\subseteq V} g(S) > \sum_{S\subseteq V} \left( l(S) - \sum_{j\in S} l(\{j\})\right) = \sum_{S\subseteq V} g_m(S) 
\end{equation*}
which means $\sum_{S\subseteq V} g(S) > \sum_{S\subseteq V} g_m(S) $ for any $g$. By Definition~\ref{def:decomposition}, $g^* = g_m =  \mathbf{D}l$, thus $f^*:=l-\mathbf{D}l = f_m$ is modular.
\end{proof}

\begin{lemma}\label{lemma:decompositionsuper2}
For a loss function $\Delta$ such that $\Delta_{\mathcal{S}}$ is increasing, we have
%For a loss function $\Delta$ corresponding to a supermodular set function $l\in\mathcal{G}_+$, and a loss function $\Delta_\mathcal{S}$ for a increasing modular function $f\in\mathcal{G}_+\cap\mathcal{S}_+$  such that $\Delta-\Delta_\mathcal{S}$ is also supermodular increasing, we have
\begin{equation}
\mathbf{S} \Delta_{\mathcal{G}} =
\mathbf{S}(\Delta-\Delta_\mathcal{S})= \mathbf{S}\Delta - \mathbf{S} \Delta_\mathcal{S}.
\end{equation}
\end{lemma}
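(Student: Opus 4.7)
The first equality $\mathbf{S}\Delta_{\mathcal{G}} = \mathbf{S}(\Delta-\Delta_{\mathcal{S}})$ is immediate from the canonical decomposition in Equation~\eqref{eq:LossDecompositionSupSub}, which gives $\Delta_{\mathcal{G}} = \Delta-\Delta_{\mathcal{S}}$ pointwise, and Definition~\ref{def:slackrescaling} depends on the loss only through pointwise evaluations inside the max. The nontrivial content is therefore the second equality $\mathbf{S}(\Delta-\Delta_{\mathcal{S}}) = \mathbf{S}\Delta - \mathbf{S}\Delta_{\mathcal{S}}$, which I plan to establish by two matching inequalities.

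Write $M(\tilde{y}) := 1 + \langle h(x),\tilde{y}\rangle - \langle h(x),y\rangle$, so that $\mathbf{S}\Delta'(y,h(x)) = \max_{\tilde{y}\in\mathcal{Y}} \Delta'(y,\tilde{y})\,M(\tilde{y})$ for any loss $\Delta'$. The ``easy'' direction is subadditivity of the max over a shared argument:
$$
\mathbf{S}\Delta \;=\; \max_{\tilde{y}}\bigl(\Delta_{\mathcal{G}}(y,\tilde{y}) + \Delta_{\mathcal{S}}(y,\tilde{y})\bigr)M(\tilde{y}) \;\leq\; \mathbf{S}\Delta_{\mathcal{G}} + \mathbf{S}\Delta_{\mathcal{S}},
$$
which rearranges to $\mathbf{S}\Delta - \mathbf{S}\Delta_{\mathcal{S}} \leq \mathbf{S}\Delta_{\mathcal{G}} = \mathbf{S}(\Delta-\Delta_{\mathcal{S}})$. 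No hypothesis on $\Delta_{\mathcal{S}}$ is needed for this direction.

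For the reverse inequality $\mathbf{S}\Delta_{\mathcal{G}} \leq \mathbf{S}\Delta - \mathbf{S}\Delta_{\mathcal{S}}$ the natural strategy is to exhibit a single $\tilde{y}^\star$ that simultaneously maximizes $\Delta_{\mathcal{G}}(y,\cdot)\,M$ and $\Delta_{\mathcal{S}}(y,\cdot)\,M$, since then $\mathbf{S}\Delta \geq \Delta(y,\tilde{y}^\star) M(\tilde{y}^\star) = \mathbf{S}\Delta_{\mathcal{G}} + \mathbf{S}\Delta_{\mathcal{S}}$ and we are done. This is where the hypothesis that $\Delta_{\mathcal{S}}$ is increasing enters: since $\Delta_{\mathcal{G}}\in\mathcal{G}_+$ is also increasing (because $g^\star = \mathbf{D} l$ lies in $\mathcal{G}_+$ by Definition~\ref{def:decomposition}, which can be seen to force monotonicity in its constraint set), flipping additional coordinates can never decrease either loss component. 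My plan is an exchange argument: start from a maximizer $\tilde{y}_{\mathcal{G}}^\star$ of $\Delta_{\mathcal{G}}\cdot M$, and greedily flip those coordinates $j$ at which $-h^j(x)y^j > 0$, so that $M$ strictly increases; monotonicity of both components should then force the greedy endpoint to be a joint maximizer. Before running the argument I would restrict the feasible set to $\{\tilde{y}:M(\tilde{y})>0\}$, since $M(\tilde{y})\le 0$ with non-negative loss gives a non-positive product that cannot beat the value $0$ attained at $\tilde{y}=y$.

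The main obstacle I anticipate is precisely the coupling between the combinatorial loss and the affine slack $M$: greedy flips that raise $M$ at indices $j\in\{j:-h^j(x)y^j>0\}$ are safe, but additional flips required to push $\Delta_{\mathcal{S}}$ to its maximum can drive $M$ back toward zero, so the exchange step must be executed only on the subset of indices where flipping benefits both the loss and the slack simultaneously. Making the joint maximizer argument rigorous despite this coupling, rather than only up to the easy subadditivity inequality, is the technical heart of the lemma; if a fully general exchange fails, a fallback would be to first reduce to the case where $\Delta_{\mathcal{S}}$ is modular (which, together with increasingness, has a linear slack-rescaled form on $\{M>0\}$) and then bootstrap.
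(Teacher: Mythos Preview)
Your instinct to split into an ``easy'' subadditivity direction and a ``hard'' joint-maximizer direction is exactly right, and it is more careful than the paper's own argument. The paper's proof simply fixes a $\tilde{y}$, distributes the product $(\Delta-\Delta_{\mathcal{S}})(y,\tilde{y})\,M(\tilde{y}) = \Delta(y,\tilde{y})\,M(\tilde{y}) - \Delta_{\mathcal{S}}(y,\tilde{y})\,M(\tilde{y})$, and then asserts that ``as this property holds for all cutting planes, it also holds for the supporting hyperplanes that define the convex surrogate.'' That last step is the entire content of the lemma and is not justified: $\max_{\tilde{y}}(a_{\tilde{y}}-b_{\tilde{y}}) = \max_{\tilde{y}} a_{\tilde{y}} - \max_{\tilde{y}} b_{\tilde{y}}$ does not follow from the pointwise identity. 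Notice too that the stated hypothesis ``$\Delta_{\mathcal{S}}$ increasing'' is never invoked in the paper's proof, which is itself a warning sign.

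The genuine gap is in your hard direction, and it is not recoverable: the equality is false as stated, so no exchange argument can succeed. Take $p=2$, $y=(1,1)$, $h(x)=(-2,2)$, and $l$ supermodular with $l(\emptyset)=0$, $l(\{1\})=l(\{2\})=1$, $l(\{1,2\})=3$. Lemma~\ref{lemma:decompositionsuper1} gives the canonical decomposition $f^\ast(S)=|S|$ (modular, increasing) and $g^\ast(\emptyset)=g^\ast(\{1\})=g^\ast(\{2\})=0$, $g^\ast(\{1,2\})=1$. One computes $M(\emptyset)=1$, $M(\{1\})=5$, $M(\{2\})=-3$, $M(\{1,2\})=1$, whence $\mathbf{S}\Delta_{\mathcal{G}}=1$, $\mathbf{S}\Delta_{\mathcal{S}}=5$, $\mathbf{S}\Delta=5$, and $\mathbf{S}\Delta_{\mathcal{G}}=1\neq 0=\mathbf{S}\Delta-\mathbf{S}\Delta_{\mathcal{S}}$. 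This kills both your exchange plan and your modular fallback (here $\Delta_{\mathcal{S}}$ is already modular).

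The saving grace is that the only consumer of this lemma is Theorem~\ref{theo:decompositionsupermodular}, whose chain of inequalities needs precisely $\mathbf{S}(\Delta-\Delta_{\mathcal{S}})\geq \mathbf{S}\Delta-\mathbf{S}\Delta_{\mathcal{S}}$, i.e.\ your easy direction. Your one-line subadditivity argument proves that cleanly and with no hypothesis on $\Delta_{\mathcal{S}}$; the lemma should really be stated as that inequality rather than as an equality.
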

\begin{proof}
By Definition~\ref{def:slackrescaling}, for every single cutting plane determined by some $\tilde{y}$, we have
\begin{align}
&\mathbf{S}\left(\Delta(y,\tilde{y})-\Delta_\mathcal{S}(y,\tilde{y})\right)\nonumber \\
& = (\Delta(y,\tilde{y})-\Delta_\mathcal{S}(y,\tilde{y})) \left(1+ \langle h(x),\tilde{y} \rangle -\langle h(x),y \rangle \right)\nonumber \\
%& = \left( \Delta(y,\tilde{y}) \left(1+ \langle h(x),\tilde{y} \rangle -\langle h(x),y \rangle \right) - \Delta_\mathcal{S}(y,\tilde{y}) \left(1+ \langle h(x),\tilde{y} \rangle -\langle h(x),y \rangle \right) \right) \\
& = \Delta(y,\tilde{y}) \left(1+ \langle h(x),\tilde{y} \rangle -\langle h(x),y \rangle \right) \nonumber \\
&\quad - \Delta_\mathcal{S}(y,\tilde{y}) \left(1+ \langle h(x),\tilde{y} \rangle -\langle h(x),y \rangle \right) \nonumber \\
& = \mathbf{S}\Delta(y,\tilde{y})- \mathbf{S}\Delta_\mathcal{S}(y,\tilde{y}) .
\end{align}
As this property holds for all cutting planes, it also holds for the supporting hyperplanes that define the convex surrogate and $\mathbf{S}(\Delta-\Delta_\mathcal{S})= \mathbf{S}\Delta - \mathbf{S} \Delta_\mathcal{S}$.
%which is true for any $\tilde{y}$ during the cutting plane procedure. Thus it is the as well finding the most violated constraint which gives us $\mathbf{S}(\Delta-\Delta_\mathcal{S})= \mathbf{S}\Delta - \mathbf{S} \Delta_\mathcal{S}.$
\end{proof}

%We use the definition of an extension as defined in~\cite{yuICML2015}:
\begin{defin}\label{def:extension}
A convex surrogate function $\mathbf{B} \Delta(y,\cdot)$ is an extension when 
\begin{equation} \label{eq:extension}
\mathbf{B} \Delta(y,\cdot) = \Delta(y,\cdot)
\end{equation} on the vertices of the 0-1 unit cube under the mapping to $\mathbb{R}^p$: 
$i=\{1,\dots,p\},\  [u]^i = 1-h^{\pi_i}(x) y^{\pi_i}$%  [u]^i = 1-\langle w,\ x^i \rangle y^i$.
\end{defin}

\begin{theorem}\label{theo:decompositionsupermodular}
If $l\in\mathcal{G}_+$, then $\mathbf{B}_{\mathbf{D}} \Delta \geq \mathbf{S} \Delta$ over the unit cube given in Definition~\ref{def:extension}, and therefore $\mathbf{B}_{\mathbf{D}}$ is closer to the convex closure of $\Delta$ than $\mathbf{S}$.
\end{theorem}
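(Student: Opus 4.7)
The plan is to reduce the inequality to a pointwise comparison between the Lov\'asz hinge and slack rescaling applied only to the modular component $\Delta_{\mathcal{S}}$. Since $l \in \mathcal{G}_{+} \subseteq \mathcal{G}$, Lemma~\ref{lemma:decompositionsuper1} applies and forces $f^{*} := l - \mathbf{D}l$ to be modular; the constructive proof there yields $f^{*}(A) = \sum_{j \in A} l(\{j\})$. Because $\mathcal{G}_{+}$ is exactly the set of increasing supermodular functions, each weight $w_j := l(\{j\})$ satisfies $w_j \geq l(\emptyset) = 0$, so $\Delta_{\mathcal{S}}$ is a non-negative, increasing, modular loss. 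This is the crucial structural reduction that powers everything that follows.

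Because $\Delta_{\mathcal{S}}$ is increasing, Lemma~\ref{lemma:decompositionsuper2} gives $\mathbf{S}\Delta_{\mathcal{G}} = \mathbf{S}\Delta - \mathbf{S}\Delta_{\mathcal{S}}$, and substituting into Equation~\eqref{eq:operatorB} produces
\begin{equation*}
\mathbf{B}_{\mathbf{D}}\Delta - \mathbf{S}\Delta \;=\; \mathbf{L}\Delta_{\mathcal{S}} - \mathbf{S}\Delta_{\mathcal{S}},
\end{equation*}
so it suffices to show $\mathbf{L}\Delta_{\mathcal{S}} \geq \mathbf{S}\Delta_{\mathcal{S}}$ on the unit cube. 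For a modular loss with non-negative weights the Lov\'asz extension collapses to the linear form $\sum_j w_j s^j$, which is non-negative on $[0,1]^p$, so the outer positive part in Definition~\ref{def:lovaszHing} is vacuous. Each slack-rescaling cutting plane, indexed by the mistake set $A := \{j : y^j \neq \tilde y^j\}$, equals $W_A\bigl(1 - 2 \sum_{j \in A}(1 - s^j)\bigr)$ after substituting $s^j = 1 - h^j(x)y^j$ and $\langle h, \tilde y - y\rangle = -2 \sum_{j \in A}(1 - s^j)$, where $W_A := \sum_{j \in A} w_j$.

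The pointwise inequality then reduces to the identity
\begin{align*}
&\sum_j w_j s^j - W_A\Bigl(1 - 2 \sum_{j \in A}(1 - s^j)\Bigr) \\
&\quad = \sum_{j \notin A} w_j s^j + \sum_{j \in A}(1 - s^j)\bigl(2 W_A - w_j\bigr),
\end{align*}
whose right-hand side is manifestly non-negative on $[0,1]^p$ because $w_j \geq 0$, $s^j \in [0,1]$, and $2W_A - w_j \geq w_j \geq 0$ for every $j \in A$. Taking the maximum of the cutting-plane expression over $A$ produces $\mathbf{S}\Delta_{\mathcal{S}}$, so $\mathbf{L}\Delta_{\mathcal{S}} \geq \mathbf{S}\Delta_{\mathcal{S}}$ and hence $\mathbf{B}_{\mathbf{D}}\Delta \geq \mathbf{S}\Delta$ over the unit cube. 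The ``closer to the convex closure'' conclusion then follows because, for a modular loss, the Lov\'asz extension is itself the tight convex closure, so replacing $\mathbf{S}\Delta_{\mathcal{S}}$ by $\mathbf{L}\Delta_{\mathcal{S}}$ tightens the surrogate on the modular piece while preserving the extension property on the vertices of the unit cube. The main technical obstacle is the last step: rewriting the slack-rescaling cutting planes in the $s$-variables and then splitting $\sum_j w_j s^j$ across $A$ and $V \setminus A$ in exactly the way that exposes term-by-term non-negativity.
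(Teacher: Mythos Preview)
Your argument is correct and follows the paper's proof almost exactly: both invoke Lemma~\ref{lemma:decompositionsuper1} to make $\Delta_{\mathcal{S}}$ modular, use Lemma~\ref{lemma:decompositionsuper2} to write $\mathbf{S}\Delta_{\mathcal{G}} = \mathbf{S}\Delta - \mathbf{S}\Delta_{\mathcal{S}}$, and then reduce to $\mathbf{L}\Delta_{\mathcal{S}} \geq \mathbf{S}\Delta_{\mathcal{S}}$ on the unit cube. The only difference is that the paper dispatches this last inequality by citing~\cite{yuICML2015}, whereas you supply an explicit, self-contained verification via the identity $\sum_j w_j s^j - W_A\bigl(1 - 2\sum_{j\in A}(1-s^j)\bigr) = \sum_{j\notin A} w_j s^j + \sum_{j\in A}(1-s^j)(2W_A - w_j)$; your computation is correct and makes the argument independent of the external reference.
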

\begin{proof}
By the definition of the Lov\'{a}sz hinge $\mathbf{L}$~\cite{yuICML2015}, we know that for any modular function $\Delta_{\mathcal{S}}$ we have $\mathbf{L}\Delta_\mathcal{S} \geq \mathbf{S}\Delta_\mathcal{S}$ over the unit cube. As a result of Lemma~\ref{lemma:decompositionsuper1} and Lemma~\ref{lemma:decompositionsuper2},we have
\begin{align}
\mathbf{B}_{\mathbf{D}} \Delta & = \mathbf{S}\Delta_\mathcal{G} +  \mathbf{L}\Delta_\mathcal{S} \nonumber 
 = \mathbf{S}(\Delta-\Delta_\mathcal{S})  +  \mathbf{L}\Delta_\mathcal{S} \nonumber \\
& \geq \mathbf{S}\Delta- \mathbf{S}\Delta_\mathcal{S}   +  \mathbf{S}\Delta_\mathcal{S} \nonumber  = \mathbf{S}\Delta .
\end{align}
\end{proof}

\begin{theorem}\label{theo:decompositionsubmodular}
If $l\in\mathcal{S}$, then $\mathbf{B}_{\mathbf{D}}\Delta=\mathbf{L}\Delta$
\end{theorem}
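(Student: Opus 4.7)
The plan is to show that when $l \in \mathcal{S}$, the canonical decomposition produces $g^* \equiv 0$ and $f^* = l$, which collapses $\mathbf{B}_{\mathbf{D}} \Delta$ directly onto $\mathbf{L} \Delta$. The argument proceeds in three short steps: verify that the zero function is a feasible candidate for the LP of Definition~\ref{def:decomposition}, observe that it achieves (and uniquely attains) the minimum objective value, and then evaluate $\mathbf{B}_{\mathbf{D}}$ using Equation~\eqref{eq:operatorB}.

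First, I would check that $g \equiv 0$ lies in $\mathcal{G}_+$. The zero function is modular, hence both submodular and supermodular, vanishes at $\emptyset$, and is non-negative; by the earlier remark that non-negative supermodular functions with $g(\emptyset)=0$ coincide with increasing supermodular functions, $0 \in \mathcal{G}_+$. Moreover $l - 0 = l$ lies in $\mathcal{S}$ by the hypothesis of the theorem, so $g \equiv 0$ satisfies every constraint of the LP defining $\mathbf{D} l$.

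Second, I would observe that $g \equiv 0$ attains the minimum of $\sum_{A \subseteq V} g(A)$ over the LP's feasible set. Any feasible $g \in \mathcal{G}_+$ is non-negative, so $\sum_{A} g(A) \geq 0$, with equality forcing $g(A)=0$ for all $A$. This both confirms optimality and yields uniqueness directly (in accordance with Proposition~\ref{pro:uniquenessOfDl}), so $\mathbf{D} l = 0$, and the canonical decomposition is $f^* = l$, $g^* = 0$. By Equation~\eqref{eq:LossDecompositionSupSub}, $\Delta_{\mathcal{S}} = \Delta$ and $\Delta_{\mathcal{G}} \equiv 0$.

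Finally, I would plug into Equation~\eqref{eq:operatorB}: $\mathbf{B}_{\mathbf{D}} \Delta = \mathbf{L} \Delta_{\mathcal{S}} + \mathbf{S} \Delta_{\mathcal{G}} = \mathbf{L} \Delta + \mathbf{S} 0$. From Definition~\ref{def:slackrescaling}, $\mathbf{S} 0 = \max_{\tilde y}\, 0 \cdot (1 + \langle h(x),\tilde y\rangle - \langle h(x),y\rangle) = 0$, giving $\mathbf{B}_{\mathbf{D}} \Delta = \mathbf{L} \Delta$. The only point that needs even minor care is the membership of the zero function in $\mathcal{G}_+$ (i.e.\ verifying the increasing condition is vacuously met once non-negativity and $g(\emptyset)=0$ hold); everything else is an immediate consequence of the LP being bounded below by $0$ and the definition of $\mathbf{B}_{\mathbf{D}}$.
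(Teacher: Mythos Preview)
Your proposal is correct and follows essentially the same approach as the paper: show that $g^*=0$ is feasible and optimal for the LP in Definition~\ref{def:decomposition} (since any $g\in\mathcal{G}_+$ has non-negative sum), conclude $\mathbf{D}l=0$ and $f^*=l$, and then read off $\mathbf{B}_{\mathbf{D}}\Delta=\mathbf{L}\Delta+\mathbf{S}0=\mathbf{L}\Delta$. Your version simply spells out the feasibility and optimality checks in more detail than the paper's two-line argument.
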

\begin{proof}
For $l\in\mathcal{S}$, we construct $g^*=\mathbf{0}$, and $f^*=l$ is submodular. By Definition~\ref{def:decomposition}, $g^*(V)$ is minimum, so $g^*=\mathbf{D}l$. Then $\mathbf{B}_{\mathbf{D}}\Delta=\mathbf{L}\Delta + \mathbf{S}\mathbf{0}=\mathbf{L}\Delta$
\end{proof}

\begin{theorem}\label{theo:convex}
$\mathbf{B}_{\mathbf{D}}\Delta$ is convex for arbitrary $\Delta$.
\end{theorem}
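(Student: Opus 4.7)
The plan is to exploit the definition $\mathbf{B}_{\mathbf{D}}\Delta := \mathbf{L}\Delta_{\mathcal{S}} + \mathbf{S}\Delta_{\mathcal{G}}$ and show that each of the two summands is a convex function of $h(x)$, since the sum of convex functions is convex. Recall that by the canonical decomposition furnished by operator $\mathbf{D}$ (Definition~\ref{def:decomposition}), $\Delta_{\mathcal{S}}$ corresponds to a submodular set function $f^{*} = l - \mathbf{D}l \in \mathcal{S}$ and $\Delta_{\mathcal{G}}$ corresponds to a (non-negative, increasing) supermodular function $g^{*} = \mathbf{D}l \in \mathcal{G}_{+}$. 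So the two ingredients are applied exactly in the regimes for which they were designed.

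First I would argue that $\mathbf{S}\Delta_{\mathcal{G}}$ is convex. By Definition~\ref{def:slackrescaling},
\begin{equation}
\mathbf{S}\Delta_{\mathcal{G}}(y,h(x)) = \max_{\tilde{y}\in\mathcal{Y}} \Delta_{\mathcal{G}}(y,\tilde{y})\bigl(1 + \langle h(x),\tilde{y}\rangle - \langle h(x),y\rangle\bigr),
\end{equation}
which is a pointwise maximum over a finite set $\mathcal{Y}$ of affine functions of $h(x)$ (with non-negative coefficients $\Delta_{\mathcal{G}}(y,\tilde{y})\geq 0$ since $g^{*}\in\mathcal{G}_{+}$). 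A finite pointwise maximum of affine functions is convex and piecewise linear, so $\mathbf{S}\Delta_{\mathcal{G}}$ is convex in $h(x)$.

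Second, I would appeal to the result of~\cite{yuICML2015} for the Lov\'{a}sz hinge: since $\Delta_{\mathcal{S}}$ corresponds to a submodular set function, $\mathbf{L}\Delta_{\mathcal{S}}(y,h(x))$ is convex in $h(x)$. The underlying reason is that, for submodular $f^{*}$, the Lov\'{a}sz extension of $f^{*}$ evaluated at the vector $s$ with entries $s^{\pi_j} = 1 - h^{\pi_j}(x)y^{\pi_j}$ equals the max written in Definition~\ref{def:lovaszHing} and is convex in $s$; precomposing with the affine map $h(x)\mapsto s$ preserves convexity, and the $(\cdot)_+$ truncation preserves convexity as well since it is the pointwise max with the constant $0$.

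Finally, since $\mathbf{B}_{\mathbf{D}}\Delta$ is the sum of two convex functions of $h(x)$, it is itself convex in $h(x)$ for arbitrary $\Delta$. I do not anticipate a real obstacle: the only subtlety is ensuring that the convexity of the Lov\'{a}sz hinge genuinely applies to $\Delta_{\mathcal{S}}$, which is precisely guaranteed by the fact that $\mathbf{D}$ produces a decomposition with $l - \mathbf{D}l \in \mathcal{S}$, so no ad hoc handling of non-submodular cases is needed.
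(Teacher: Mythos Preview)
Your argument is correct and follows exactly the same approach as the paper: $\mathbf{B}_{\mathbf{D}}\Delta$ is the sum of the two convex surrogates $\mathbf{L}\Delta_{\mathcal{S}}$ and $\mathbf{S}\Delta_{\mathcal{G}}$, hence convex. The paper states this in a single sentence, while you additionally unpack why each summand is convex (pointwise max of affine functions for $\mathbf{S}$, and the Lov\'{a}sz extension argument for $\mathbf{L}$ applied to submodular $f^{*}$), which is a faithful elaboration rather than a different route.
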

\begin{proof}
By Definition~\ref{def:decomposition}, $\mathbf{B}_{\mathbf{D}}\Delta$ is the sum of the two convex surrogates, which is a convex surrogate.
\end{proof}

\begin{theorem}\label{theo:extension}
$\mathbf{B}_{\mathbf{D}}\Delta$ is an extension of $\Delta$ iff $\Delta_{\mathcal{S}}$ is non-negative.
\end{theorem}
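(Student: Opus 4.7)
The plan is to evaluate $\mathbf{B}_{\mathbf{D}}\Delta = \mathbf{L}\Delta_{\mathcal{S}} + \mathbf{S}\Delta_{\mathcal{G}}$ at an arbitrary vertex of the unit cube in the $u$-coordinates of Definition~\ref{def:extension} and match it against $\Delta$. Fix a vertex $u \in \{0,1\}^p$ and let $A = \{i : [u]^i = 1\}$. By Equation~\eqref{eq:delta_l}, $A$ corresponds to a unique $\tilde{y}$ with $y^i \neq \tilde{y}^i \iff i \in A$, so $\Delta(y,\tilde{y}) = l(A) = \Delta_{\mathcal{G}}(A) + \Delta_{\mathcal{S}}(A)$. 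The strategy is to show $\mathbf{S}\Delta_{\mathcal{G}}$ recovers $\Delta_{\mathcal{G}}(A)$ unconditionally at this vertex, while $\mathbf{L}\Delta_{\mathcal{S}}$ recovers $\max(\Delta_{\mathcal{S}}(A), 0)$, so that the sum coincides with $\Delta(y,\tilde{y})$ exactly when $\Delta_{\mathcal{S}}(A) \geq 0$ for all $A$.

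For the slack rescaling piece, I would compute for a candidate $\tilde{y}'$ with mismatch set $B = \{i : \tilde{y}'^i \neq y^i\}$ that
\begin{equation}
1 + \langle h(x), \tilde{y}' - y\rangle = 1 - 2\sum_{i \in B} h^i(x)\, y^i = 1 - 2\,|B \setminus A| ,
\end{equation}
using the vertex condition $h^i(x) y^i = 1 - [u]^i \in \{0,1\}$. If $B \subseteq A$ the factor is $1$ and the objective is $\Delta_{\mathcal{G}}(B) \leq \Delta_{\mathcal{G}}(A)$ since $\Delta_{\mathcal{G}} \in \mathcal{G}_{+}$ is increasing; if $B \not\subseteq A$ the factor is $\leq -1$ while $\Delta_{\mathcal{G}}(B) \geq 0$, hence the product is non-positive and dominated by $\Delta_{\mathcal{G}}(A) \geq 0$. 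Therefore $\mathbf{S}\Delta_{\mathcal{G}}(y,h(x)) = \Delta_{\mathcal{G}}(A)$, independent of the sign of $\Delta_{\mathcal{S}}$.

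For the Lov\'asz hinge piece, I would invoke the defining extension property of the Lov\'asz extension: evaluating the inner permutation sum in Definition~\ref{def:lovaszHing} at $s = [u]$, any permutation that orders the coordinates with $s^{\pi_j} = 1$ before those with $s^{\pi_j} = 0$ telescopes to $f^*(A) = \Delta_{\mathcal{S}}(A)$. Applying the outer $(\cdot)_+$ yields $\mathbf{L}\Delta_{\mathcal{S}}(y,h(x)) = \max(\Delta_{\mathcal{S}}(A), 0)$. Summing,
\begin{equation}
\mathbf{B}_{\mathbf{D}}\Delta(y,h(x)) = \Delta_{\mathcal{G}}(A) + \max(\Delta_{\mathcal{S}}(A), 0) ,
\end{equation}
which equals $\Delta(y,\tilde{y}) = \Delta_{\mathcal{G}}(A) + \Delta_{\mathcal{S}}(A)$ iff $\Delta_{\mathcal{S}}(A) \geq 0$. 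Since $A$ ranges over all subsets of $V$ as $u$ ranges over the vertices, Equation~\eqref{eq:extension} holds for every vertex iff $\Delta_{\mathcal{S}}$ is non-negative on $\mathcal{P}(V)$, giving both directions of the biconditional.

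The main obstacle I anticipate is justifying rigorously the vertex evaluation of the Lov\'asz extension in the generic case where multiple coordinates of $[u]$ tie (so several permutations $\pi$ are admissible); one must verify that every admissible permutation yields the same telescoped value $f^*(A)$. This is standard for the Lov\'asz extension but worth pointing to, e.g.\ via a brief appeal to the reference given after Definition~\ref{def:lovaszHing}. The slack-rescaling calculation is essentially arithmetic once the vertex condition on $h^i(x) y^i$ is unpacked.
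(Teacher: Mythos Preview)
Your proposal is correct and follows essentially the same decomposition strategy as the paper: verify that $\mathbf{S}\Delta_{\mathcal{G}}$ and $\mathbf{L}\Delta_{\mathcal{S}}$ separately reproduce $\Delta_{\mathcal{G}}$ and $\Delta_{\mathcal{S}}$ at the vertices, then sum. The paper's proof is terser---it simply cites \cite[Proposition~1]{yuICML2015} for the extension property of $\mathbf{S}$ on increasing supermodular losses and the characterization of when $\mathbf{L}$ coincides with the Lov\'asz extension---whereas you unpack both computations explicitly at a generic vertex. Your treatment has the minor advantage of making the ``only if'' direction transparent via the identity $\mathbf{B}_{\mathbf{D}}\Delta = \Delta_{\mathcal{G}}(A) + \max(\Delta_{\mathcal{S}}(A),0)$, while the paper leaves that direction implicit in the cited iff for $\mathbf{L}$; conversely, the paper avoids having to discuss the tie-breaking issue for permutations that you (correctly) flag as needing a pointer to the standard Lov\'asz extension literature.
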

\begin{proof}
From \cite[Proposition~1]{yuICML2015}, $\mathbf{S}\Delta$ is an extension for any supermodular increasing $\Delta$; $\mathbf{L}\Delta$ is an extension iff $\Delta$ is submodular and non-negative as in this case, $\mathbf{L}$ coincides with the Lov\'{a}sz extenstion~\cite{lovasz1983submodular}. By construction from Definition~\ref{def:decomposition} we have $\Delta_{\mathcal{G}}$ and $\Delta_{\mathcal{S}}$ for $g\in\mathcal{G}_+$ and $f\in\mathcal{S}$, respectively. Thus Equation~\ref{eq:extension} holds for both $\mathbf{S}\Delta_{\mathcal{G}}$ and $\mathbf{L}\Delta_{\mathcal{S}}$ if $\Delta_{\mathcal{S}}$ is non-negative. Then $\mathbf{B}_{\mathbf{D}}\Delta$ taking the sum of the two extensions, Equation~\ref{eq:extension} also holds for every vertex of the unit cube as $\Delta = \Delta_{\mathcal{G}}+\Delta_{\mathcal{S}}$, which means $\mathbf{B}_\mathbf{D}$ is also an extension of $\Delta$ .
\end{proof}

\begin{theorem}\label{theo:polytime}
The subgradient computation of $\mathbf{B}_\mathbf{D} \Delta$ is polynomial time given polynomial time oracle access to $f^*$ and $g^*$.
\end{theorem}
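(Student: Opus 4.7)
The plan is to exploit the additive structure $\mathbf{B}_{\mathbf{D}}\Delta = \mathbf{L}\Delta_{\mathcal{S}} + \mathbf{S}\Delta_{\mathcal{G}}$. Since the subdifferential of a sum of convex functions is the Minkowski sum of the subdifferentials, it suffices to produce a subgradient of each summand in polynomial time and add them. This splits the task into two independent subgradient computations, one per summand.

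For $\mathbf{L}\Delta_{\mathcal{S}}$ I would invoke the standard subgradient procedure for the Lov\'asz hinge. Given $y$ and $h(x)$, compute the slacks $s^{\pi_j}$ from Equation~\eqref{eq:s_i_pi}, sort them in decreasing order in $O(p\log p)$ time, then evaluate $f^*$ on the resulting chain of $p$ nested sets with $p$ oracle calls, reading off the marginal differences $f^*(\{\pi_1,\ldots,\pi_j\}) - f^*(\{\pi_1,\ldots,\pi_{j-1}\})$. The greedy permutation attains the maximum in Definition~\ref{def:lovaszHing}, and a subgradient with respect to $h(x)$ is the vector of marginal differences (together with the clipping at zero), all computed in polynomial time given oracle access to $f^*$.

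For $\mathbf{S}\Delta_{\mathcal{G}}$, a subgradient is determined by any maximizer $\hat y \in \arg\max_{\tilde y} \Delta_{\mathcal{G}}(y,\tilde y)\bigl(1 + \langle h(x),\tilde y\rangle - \langle h(x),y\rangle\bigr)$. Because by construction $\mathbf{D}l \in \mathcal{G}_+$ is supermodular and (by Proposition~\ref{pro:decompositionIncreasingExists}) can be taken to be increasing, I would invoke the polynomial-time slack-rescaling inference procedure available for supermodular increasing losses: sweep over loss levels and at each level solve a linear maximization of $\langle h(x),\tilde y\rangle$ subject to a superlevel-set constraint on $\Delta_{\mathcal{G}}$, which reduces to supermodular function evaluations against the $g^*$ oracle. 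Given polynomial-time oracle access to $g^*$ this returns $\hat y$ in polynomial time, and the resulting supporting hyperplane is again linear in $h(x)$.

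The main obstacle is the slack-rescaling stage on $\Delta_{\mathcal{G}}$: the Lov\'asz-hinge half is a textbook application, whereas loss-augmented inference with a multiplicative loss rescaling is the nontrivial ingredient, and is precisely what forces the restriction of $\mathbf{D}l$ to increasing supermodular functions. Once $\hat y$ and the sorted permutation are both in hand, adding the two supporting hyperplanes yields a valid subgradient of $\mathbf{B}_{\mathbf{D}}\Delta$, and every step above runs in time polynomial in $p$ and in the cost of a single oracle query to $f^*$ or $g^*$.
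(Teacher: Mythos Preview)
Your proposal is correct and follows exactly the paper's approach: use the additive structure $\mathbf{B}_{\mathbf{D}}\Delta = \mathbf{L}\Delta_{\mathcal{S}} + \mathbf{S}\Delta_{\mathcal{G}}$, obtain a subgradient of each summand in polynomial time, and add them. The paper's own proof is a two-sentence assertion of precisely this (invoking known polynomial-time subgradient computation for $\mathbf{L}$ and $\mathbf{S}$ separately, then summing), so your version is strictly more detailed than what appears in the paper; in particular, your sketch of the slack-rescaling inference goes beyond what the paper justifies, which simply treats that step as a black box.
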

\begin{proof}
Given $f^*$ and $g^*$ we know that the subgradient computation of $\mathbf{L}\Delta_{\mathcal{S}}$ and $\mathbf{S}\Delta_{\mathcal{G}}$ are each polynomial time. Thus taking the sum of the two is also polynomial time.
\end{proof}

\section{S{\o}rensen-Dice loss}\label{sec:DiceTheory}
%definition, why it is important, and theorem w/ proof that it is neither submodular nor supermodular.
The S{\o}rensen-Dice criterion~\cite{dice45,Sorensen-1948-BK} is a popular criterion for evaluating diverse prediction problems such as image segmentation~\cite{sabuncu2010generative} and language processing~\cite{rychly2008lexicographer}. In this section, we introduce the S{\o}rensen-Dice loss based on the S{\o}rensen-Dice coefficient.  We prove that the S{\o}rensen-Dice loss is neither supermodular nor submodular, and we will show in the experimental results section that our novel convex surrogate can yield improved performance on this measure.

\begin{defin}[S{\o}rensen-Dice Loss] Denote $y \subseteq V$ is a set of positive labels, e.g.\ foreground pixels, 
%the Dice score on a groundtruth $y$ and a predicted output $\tilde{y}$ is
%\begin{equation}
%D(y,\tilde{y}) = \frac{2|y\cap \tilde{y}|}{|y|+|\tilde{y}|}.
%\end{equation}
%And 
the S{\o}rensen-Dice loss on given a groundtruth $y$ and a predicted output $\tilde{y}$ is defined as
\begin{equation}
\Delta_D(y,\tilde{y}) = 1-  \frac{2|y\cap \tilde{y}|}{|y|+|\tilde{y}|}. \label{eq:diceloss}
\end{equation}
\end{defin}

\begin{pro}\label{pro:dicemodularity}
%The S{\o}rensen-Dice loss
$\Delta_D(y,\tilde{y})$ is neither submodular nor supermoduler under the isomorphism $(y^*,\tilde{y}) \rightarrow A := \{i | y_i^* \neq \tilde{y}_i\}$, $\Delta_{J}(y^*,\tilde{y}) \cong l(A)$.  
\end{pro}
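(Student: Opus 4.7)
The plan is to rewrite $\Delta_D$ in the set-function form $l(A)$ and then exhibit subsets on which the submodular and supermodular inequalities fail in opposite directions. Writing $k := |y^*|$ and partitioning the base set as $V = V^+ \sqcup V^-$ according to the sign of $y^*_i$, the disagreement set $A = \{i : y^*_i \neq \tilde{y}_i\}$ splits as $A = A^+ \cup A^-$ with $a := |A^+|$ counting false negatives and $b := |A^-|$ counting false positives. A direct calculation yields $|\tilde{y}| = k - a + b$ and $|y^* \cap \tilde{y}| = k - a$, so Equation~\eqref{eq:diceloss} becomes
\begin{equation*}
l(A) = 1 - \frac{2(k-a)}{2k - a + b}.
\end{equation*}
In particular the restriction of $l$ to $\mathcal{P}(V^+)$ is a symmetric set function with $c_+(a) = a/(2k-a)$ (obtained by setting $b = 0$), while its restriction to $\mathcal{P}(V^-)$ is symmetric with $c_-(b) = b/(2k+b)$ (setting $a = 0$).

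Next I would observe that $c_+$ is strictly convex and $c_-$ is strictly concave, either by computing $c_+''(a) = 4k/(2k-a)^3 > 0$ and $c_-''(b) = -4k/(2k+b)^3 < 0$, or by inspection of discrete second differences. Invoking the proposition that a symmetric set function is submodular iff its defining $c$ is concave, the restriction of $l$ to $\mathcal{P}(V^+)$ fails to be submodular, and its restriction to $\mathcal{P}(V^-)$ fails to be supermodular. Since a (super)submodularity inequality that must hold on all of $\mathcal{P}(V)$ must in particular hold on each sublattice $\mathcal{P}(V^\pm)$, it follows that $l$ itself is neither submodular nor supermodular, provided only that $|V^+| \geq 2$ and $|V^-| \geq 2$ (otherwise the Sørensen-Dice loss degenerates). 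For those preferring a concrete witness, the same conclusion follows numerically by taking $k = 2$ with $|V^+| = |V^-| = 2$: adding two false negatives in succession to $B = \emptyset$ produces marginal losses of $\tfrac{1}{3}$ and then $\tfrac{2}{3}$ (increasing marginals, violating Definition~\ref{def:submodular}), while adding two false positives yields marginals of $\tfrac{1}{5}$ and then $\tfrac{2}{15}$ (decreasing marginals, violating the supermodular inequality).

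The main subtlety is purely the bookkeeping involved in the isomorphism $(y^*, \tilde{y}) \leftrightarrow A$: one must verify that adding an element to $A$ corresponds unambiguously to creating a false negative when the element is in $V^+$ and a false positive when in $V^-$, so that the one-dimensional restrictions $c_+$ and $c_-$ correctly describe the marginal behaviour. Once this correspondence is fixed, the argument reduces either to the symmetric-function proposition applied to each sublattice, or to the two explicit arithmetic checks above, with no further obstacle.
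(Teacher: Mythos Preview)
Your argument is correct, but it takes a genuinely different route from the paper's.

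The paper proves a single lemma (Lemma~\ref{lem:falsenegative}) in which the added element $i$ is always a false negative, but the enclosing sets $B\subseteq A$ are allowed to contain \emph{both} false negatives and false positives (in their counterexample $p_A=8$, $p_B=5$, $m=10$). Because those sets are not symmetric in the sense of Proposition~1, the paper cannot appeal to convexity/concavity of a one-dimensional $c$ and instead exhibits numerically that the two marginal-gain expressions in Equations~\eqref{eq:L_D_A_gain} and~\eqref{eq:L_D_B_gain} cross as $n_A$ varies, which simultaneously rules out submodularity and supermodularity for this single restriction.

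You, by contrast, restrict to the two \emph{pure} sublattices $\mathcal{P}(V^+)$ and $\mathcal{P}(V^-)$ separately. On each of these the induced set function is symmetric, so you can legitimately invoke the paper's own Proposition~1 and reduce the question to the sign of $c_+''$ and $c_-''$, which you compute analytically. One sublattice kills submodularity, the other kills supermodularity. This is cleaner: it is fully analytic, requires no plot, and reuses a proposition already stated in the paper. The trade-off is that you need the two mild nondegeneracy hypotheses $|V^+|\ge 2$ and $|V^-|\ge 2$, whereas the paper's mixed-set counterexample lives inside a single restriction but pays for this with a numerical crossing argument.
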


We will use the diminishing returns definition of submodularity in Definition~\ref{def:submodular} to first prove the following lemma:
\begin{lemma}\label{lem:falsenegative}
  $\Delta_D$ restricted to false negatives is neither submodular nor supermoduler.
\end{lemma}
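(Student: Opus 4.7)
The plan is to reduce $\Delta_D$ under the isomorphism of Proposition~\ref{pro:dicemodularity} to a closed form in the false-negative and false-positive counts, and then to exhibit two small-cardinality witnesses, one breaking the submodular inequality of Definition~\ref{def:submodular} and one breaking its supermodular counterpart.

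First I would fix $y^*$ and split any error set as $A = F \cup P$ with $F := A \cap y^*$ (false negatives) and $P := A \setminus y^*$ (false positives). Substituting $|y^* \cap \tilde y| = |y^*| - |F|$ and $|\tilde y| = |y^*| - |F| + |P|$ into Equation~\eqref{eq:diceloss} yields
\begin{equation*}
l(A) = \frac{|F| + |P|}{2|y^*| - |F| + |P|}.
\end{equation*}
Writing $n := |y^*|$, the false-negative sublattice ($P = \emptyset$) collapses to the symmetric set function with generator $c(k) = k/(2n-k)$, whose forward difference $c(k+1) - c(k) = 2n/[(2n-k-1)(2n-k)]$ is strictly increasing in $k$.

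For the submodular failure I would take $B = \emptyset \subsetneq A = \{i\} \subset y^*$ with $x \in y^* \setminus \{i\}$; all three sets sit inside the FN sublattice, so the two marginal gains reduce to $c(1) - c(0)$ and $c(2) - c(1)$, and the strictly increasing forward difference immediately violates the diminishing-returns inequality of Definition~\ref{def:submodular}.

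For the supermodular failure I would keep the added element a false negative but allow the reference configuration to already carry a false positive, since the strict convexity of $c$ makes the pure FN sublattice strictly supermodular and no witness can live there. Taking $B = \emptyset$, $A = \{m\}$ with $m \in V \setminus y^*$, and $x \in y^*$, the closed form of $l$ gives marginals $l(B \cup x) - l(B) = 1/(2n-1)$ and $l(A \cup x) - l(A) = 1/n - 1/(2n+1)$, and a single numerical check (already $n = 2$ suffices) shows the first strictly exceeds the second, breaking the supermodular inequality. The main obstacle is interpreting ``restricted to false negatives'': on the strict FN sublattice $l$ is strictly supermodular, so the phrase must be read as constraining the added element $x$ to be a false-negative coordinate while letting the background set contain false positives, and once that is pinned down the two witnesses above deliver both halves of the lemma.
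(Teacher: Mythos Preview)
Your argument is correct, and your reading of ``restricted to false negatives'' (the added element is a false negative while the background sets may already carry false positives) matches the paper's own usage. The paper, however, proceeds numerically rather than analytically: after deriving the same closed form, it writes the marginal gain of adding a false negative to a set with $n$ false negatives and $p$ false positives, fixes a single parameter family ($m=10$, $p_A=8$, $p_B=5$, $n_B=n_A-1$), and plots the two gain curves as $n_A$ ranges over $\{1,\dots,8\}$; the observation that the curves cross supplies witnesses against both submodularity and supermodularity simultaneously. Your route instead shows that the pure false-negative sublattice has strictly convex generator $c(k)=k/(2n-k)$, which immediately kills submodularity, and then steps outside that sublattice with a minimal explicit witness (one false positive in the background) to kill supermodularity. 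This yields constant-size certificates in place of a plotted family, and it also surfaces a subtlety the paper's choice of $p_A,p_B>0$ obscures: on the pure false-negative sublattice $\Delta_D$ is strictly supermodular, so the lemma must indeed be read in the looser sense you identify.
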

\begin{proof}
With the notation $m := |y^*| > 0$, $p := | \tilde{y} \setminus y^* |$, and $n := | y^* \setminus \tilde{y} |$, we have that
\begin{equation}
\Delta_D(y,\tilde{y}) = 1 - \frac{2m-2n}{2m-n+p} = \frac{n+p}{2m-n+p}
\end{equation}
For a given groundtruth $y$ i.e.\ $m$, we have if $B\subseteq A$, then $n_B \leq n_A$, and $p_B\leq p_A$.

Considering $i$ is an extra false negative, we calculate the marginal gain on $A$ and $B$ respectively:
\begin{align}
&\Delta_D(A\cup\{i\})-\Delta_D(A) \nonumber\\
& = \frac{n_A+1+p_A}{2m-n_A-1+p_A} -\frac{n_A+p_A}{2m-n_A+p_A} \\
& = \frac{2m+2p_A}{(2m-n_A+p_A-1)(2m-n_A+p_A)}\label{eq:L_D_A_gain}\\
&\Delta_D(B\cup\{i\})-\Delta_D(B) \nonumber\\
& = \frac{2m+2p_B}{(2m-n_B+p_B-1)(2m-n_B+p_B)}.\label{eq:L_D_B_gain}
\end{align}
Numerically, we have following counter examples which prove that $\Delta_D$ restricted to false negatives is neither submodular nor supermodular. We set $m=10,\ n_A=[1:8],\ n_B=n_A-1 \leq n_A,\ 
 p_A=8,\ p_B=5\leq p_A$, and we plot the values of Equation~\eqref{eq:L_D_A_gain} and Equation~\ref{eq:L_D_B_gain} as a function of $n_A$. We can see from Figure~\ref{fig:counterexample} that there exists a cross point between these two plots, which indicates that submodularity (Definition~\ref{def:submodular}) does not hold for $\Delta_{D}$ or its negative.

%\begin{figure}[]\centering
%\includegraphics[width=0.8\linewidth]{./fig/fncounter}
%\caption{Plots of Equation~\eqref{eq:L_D_A_gain} (red) and Equation~\ref{eq:L_D_B_gain} (blue) as a function of $n_A$. As these two plots cross, neither function bounds the other and the loss is neither submodular nor supermodular.}\label{fig:counterexample}
%\end{figure}

\iffalse
\subsection{Restricted to false positives}
We then calculate the gains by $i$ is an extra false positive:
\begin{align}\label{eq:L_D_A_gain}
\Delta_D(A\cup\{i\})-\Delta_D(A) = \frac{n_A+p_A+1}{2m-n_A+p_A+1} -\frac{n_A+p_A}{2m-n_A+p_A} = \frac{2m-2n_A}{(2m-n_A+p_A+1)(2m-n_A+p_A)}
\end{align}
We want to compare Equation~\eqref{eq:L_D_A_gain} with
\begin{align}\label{eq:L_D_B_gain}
\Delta_D(B\cup\{i\})-\Delta_D(B) = \frac{2m-2n_B}{(2m-n_B+p_B+1)(2m-n_B+p_B)}
\end{align}
under the constraint that
\begin{equation}
n_B \leq n_A,\  p_B\leq p_A.
\end{equation}
\fi
\end{proof}

Lemma~\ref{lem:falsenegative} implies Proposition~\ref{pro:dicemodularity} as the restriction of a submodular function is itself submodular.

\begin{figure}[]\centering
\begin{minipage}{0.54\columnwidth}
\includegraphics[width=0.95\linewidth]{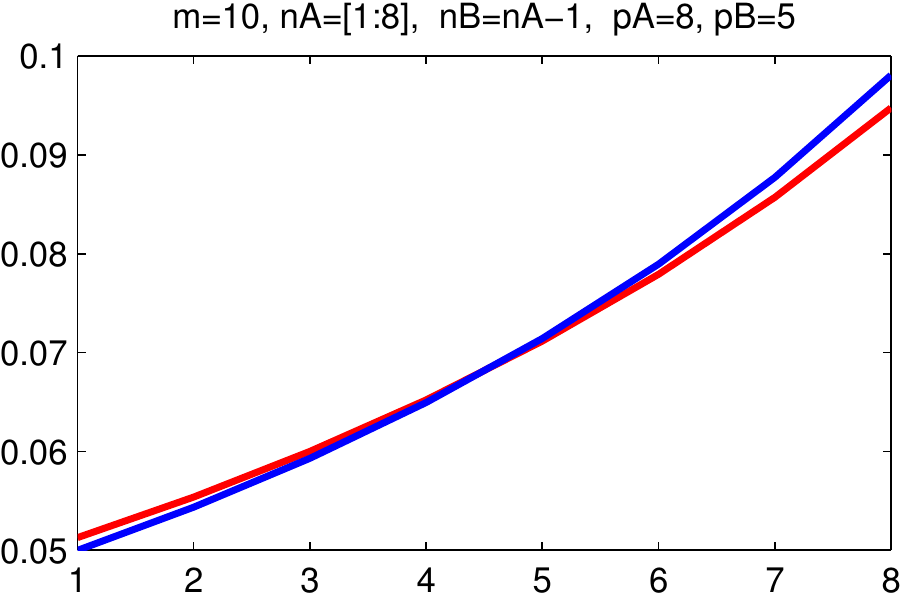}
\caption{Plots of Eq.~\eqref{eq:L_D_A_gain} (red) and Eq.~\ref{eq:L_D_B_gain} (blue) as a function of $n_A$. As these two plots cross, neither function bounds the other% and the loss is neither submodular nor supermodular
.}\label{fig:counterexample}
%\end{figure}
%\begin{figure}  \centering
\end{minipage}\hfill
\begin{minipage}{0.445\columnwidth}
\includegraphics[width=0.95\linewidth]{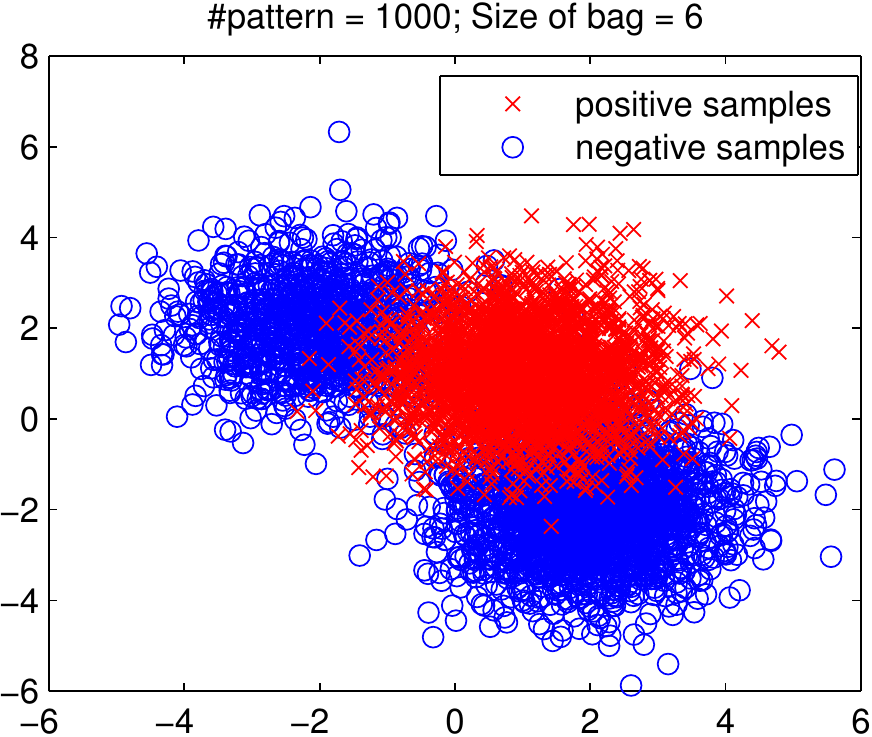}
\caption{\label{fig:toy}The  data for the synthetic problem. The negative samples are drawn from a mixture of Gaussians.}	
\end{minipage}\hfill
\end{figure}

\section{Experimental Results}\label{sec:result}

We demonstrate the correctness and feasibility of the proposed convex surrogate on experiments using Dice loss, as well as on a face classification problem from video sequences with a family of non-modular losses.
\subsection{Dice loss}
We test the proposed surrogate on a binary set prediction problem. Two classes of 2-dimensional data are generated by different Gaussian mixtures as shown in Fig~\ref{fig:toy}.
We use the $\mathbf{B}_{\mathbf{D}}$ during training time with the non-modular loss $\Delta_D$ to construct a convex surrogate. We compare it to slack rescaling $\mathbf{S}$ with an approximate optimization procedure based on greedy maximization. We additionally train an SVM (denoted 0-1 in the results table) for comparison. During test time, we evaluate with $\Delta_D$ and with Hamming loss to calculate the empirical error values as shown in Table~\ref{tab:losstoy}. 
\iffalse
\begin{table}[]\centering
\resizebox{0.8\linewidth}{!}{
\begin{tabular}{ c|c|c}
\hline
 & \multicolumn{2}{|c}{Test} \\
\cline{2-3}
$p=6$ & $\Delta_D$ & 0-1  \\
\hline
\multirow{3}{*}{ }
  $\mathbf{B}_{\mathbf{D}}$ &	$\mathbf{0.1121 \pm 0.0040}$  &  $0.6027\pm 0.0125$  \\
  0-1                       &	$0.1497\pm 0.0046$             & $0.5370\pm 0.0114$  \\
  $\mathbf{S}$              &	$0.3183\pm 0.0148$             & $0.7313\pm 0.0209$  \\
\hline
\end{tabular}}
\caption{The cross comparison of average loss values (with standard error) using different surrogate operations during training, and different evaluation functions during test time. $\Delta_D$ is as in Eq.~\eqref{eq:diceloss}. %As predicted by theory, 
%Training $\Delta_D$ with $\mathbf{B}_{\mathbf{D}}$ yields the best result while using $\Delta_D$  during test time. $\mathbf{B}_{\mathbf{D}}$ performs better than $\mathbf{S}$ in both cases due to the failure of the approximate maximization procedure necessary to maintain computational feasibility~\cite{krause2012survey}. 
}\label{tab:losstoy}
\end{table}
\fi

We can see from the result that 
%testing with the same loss as during training yields the best performance. In both cases, $\mathbf{B}_{\mathbf{D}}$ performs significantly better than $\mathbf{S}$.
training $\Delta_D$ with $\mathbf{B}_{\mathbf{D}}$ yields the best result while using $\Delta_D$  during test time. $\mathbf{B}_{\mathbf{D}}$ performs better than $\mathbf{S}$ in both cases due to the failure of the approximate maximization procedure necessary to maintain computational feasibility~\cite{krause2012survey}. 

\begin{table*}[]\centering
\begin{minipage}{0.36\linewidth}
\resizebox{0.99\linewidth}{!}{\centering
\begin{tabular}{ c|c|c}
\hline
 & \multicolumn{2}{|c}{Test} \\
\cline{2-3}
$p=6$ & $\Delta_D$ & 0-1  \\
\hline
\multirow{3}{*}{ }
  $\mathbf{B}_{\mathbf{D}}$ &	$\mathbf{0.1121 \pm 0.0040}$  &  $0.6027\pm 0.0125$  \\
  0-1                       &	$0.1497\pm 0.0046$             & $0.5370\pm 0.0114$  \\
  $\mathbf{S}$              &	$0.3183\pm 0.0148$             & $0.7313\pm 0.0209$  \\
\hline
\end{tabular}}
\caption{For the synthetic data experiment, the cross comparison of average loss values (with standard error) using different surrogate operations during training, and different evaluation functions during test time. $\Delta_D$ is the Dice loss as in Eq.~\eqref{eq:diceloss}. %As predicted by theory, 
%Training $\Delta_D$ with $\mathbf{B}_{\mathbf{D}}$ yields the best result while using $\Delta_D$  during test time. $\mathbf{B}_{\mathbf{D}}$ performs better than $\mathbf{S}$ in both cases due to the failure of the approximate maximization procedure necessary to maintain computational feasibility~\cite{krause2012survey}. 
}\label{tab:losstoy}
%\end{table}
\end{minipage}\hfill
\begin{minipage}{0.63\linewidth}\centering
%\begin{table*}
\resizebox{0.99\linewidth}{!}{
\begin{tabular}{ c|c|c|c|c}
\hline
  & \multicolumn{4}{|c}{ loss functions } \\
\cline{2-5}
 & $\Delta_1$ & $\Delta_2$ &  $\Delta_3$ ($\Delta_{\mathcal{S}}$ negative) & $\Delta_4$  ($\Delta_{\mathcal{S}}$ negative) \\
\hline
  $\mathbf{B}_\mathbf{D}$ &	$\mathbf{0.194 \pm 0.006}$ &	$\mathbf{0.238 \pm 0.008}$ &	$0.148 \pm 0.005$  &	$0.108 \pm 0.004$    \\
  0-1          &	$0.228 \pm 0.007$  &	$0.284 \pm 0.004$  &	$0.144 \pm 0.004$   &	$0.107 \pm 0.003$          \\
  $\mathbf{S}$ &	$0.398 \pm 0.015$  & $0.243 \pm 0.005$    &	$0.143 \pm 0.006$   &	$0.106 \pm 0.003$    \\
\hline
\end{tabular}}
\caption{\label{tab:error}For the face classification task, the cross comparison of average loss values (with standard error) using different surrogate operator and losses as in Equation~\eqref{eq:loss1} to Equation~\eqref{eq:loss4} during training, respectively. For the cases that the submodular component is non-negative, i.e.\ using $\Delta_1$ and $\Delta_2$, the lowest empirical error is achieved when using $\mathbf{B}_\mathbf{D}$.}
\end{minipage}\hfill
\end{table*}

\subsection{Face classification in video sequences}
\begin{figure*}
\centering
\subfigure[$\Delta_1$]{\includegraphics[width=0.2\linewidth]{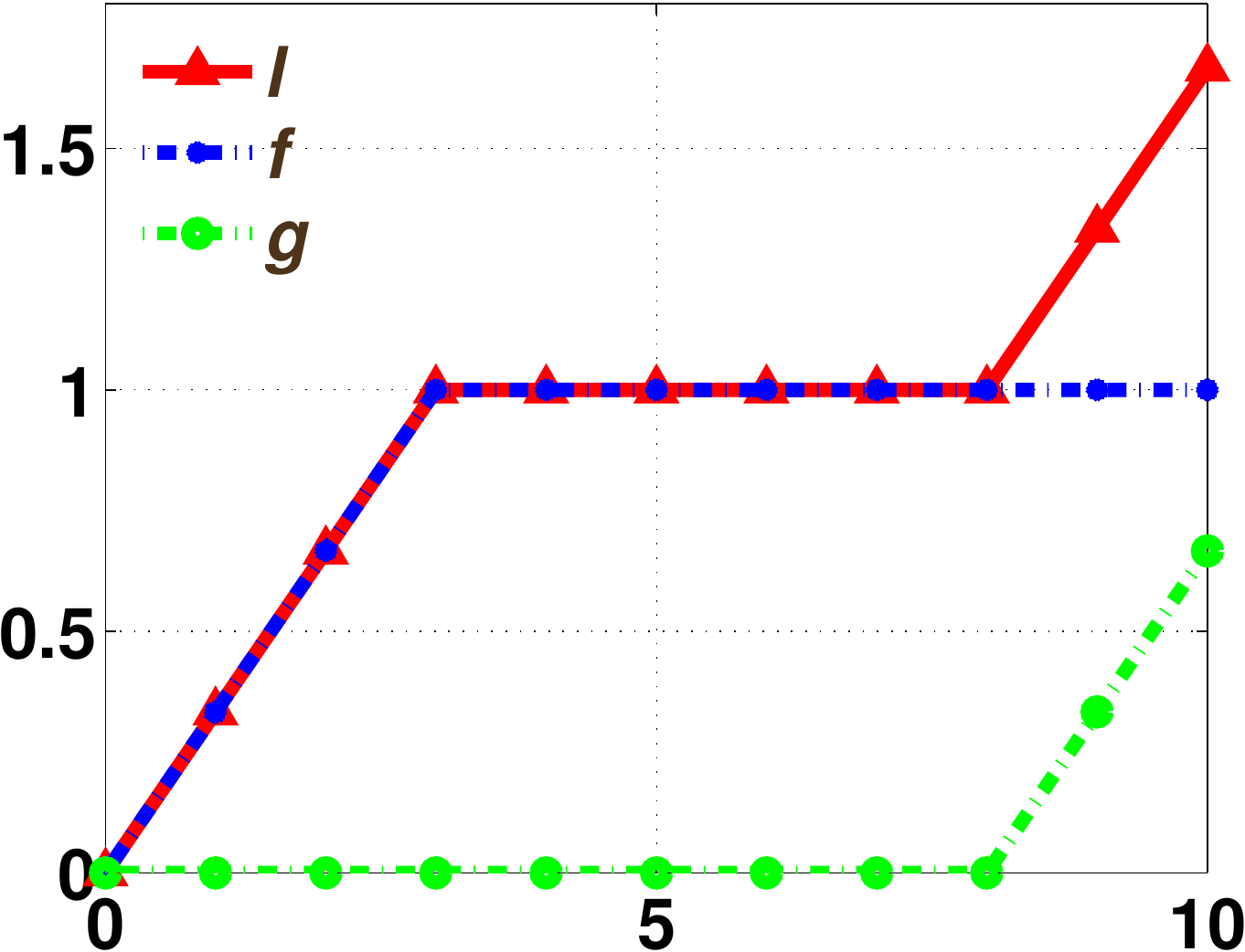}\label{fig:loss1}}
\subfigure[$\Delta_2$]{\includegraphics[width=0.2\linewidth]{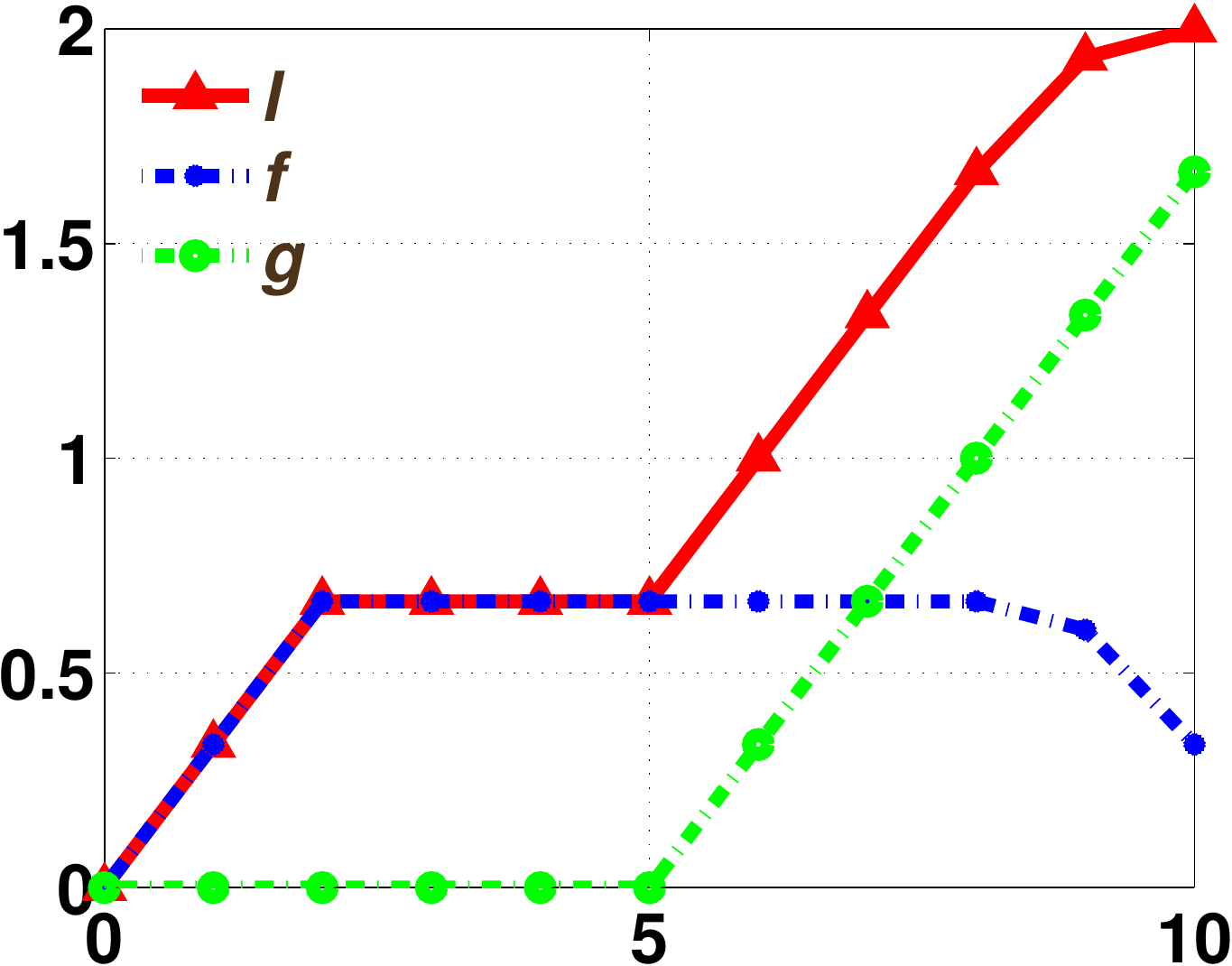}\label{fig:loss2}}
\subfigure[$\Delta_3$]{\includegraphics[width=0.2\linewidth]{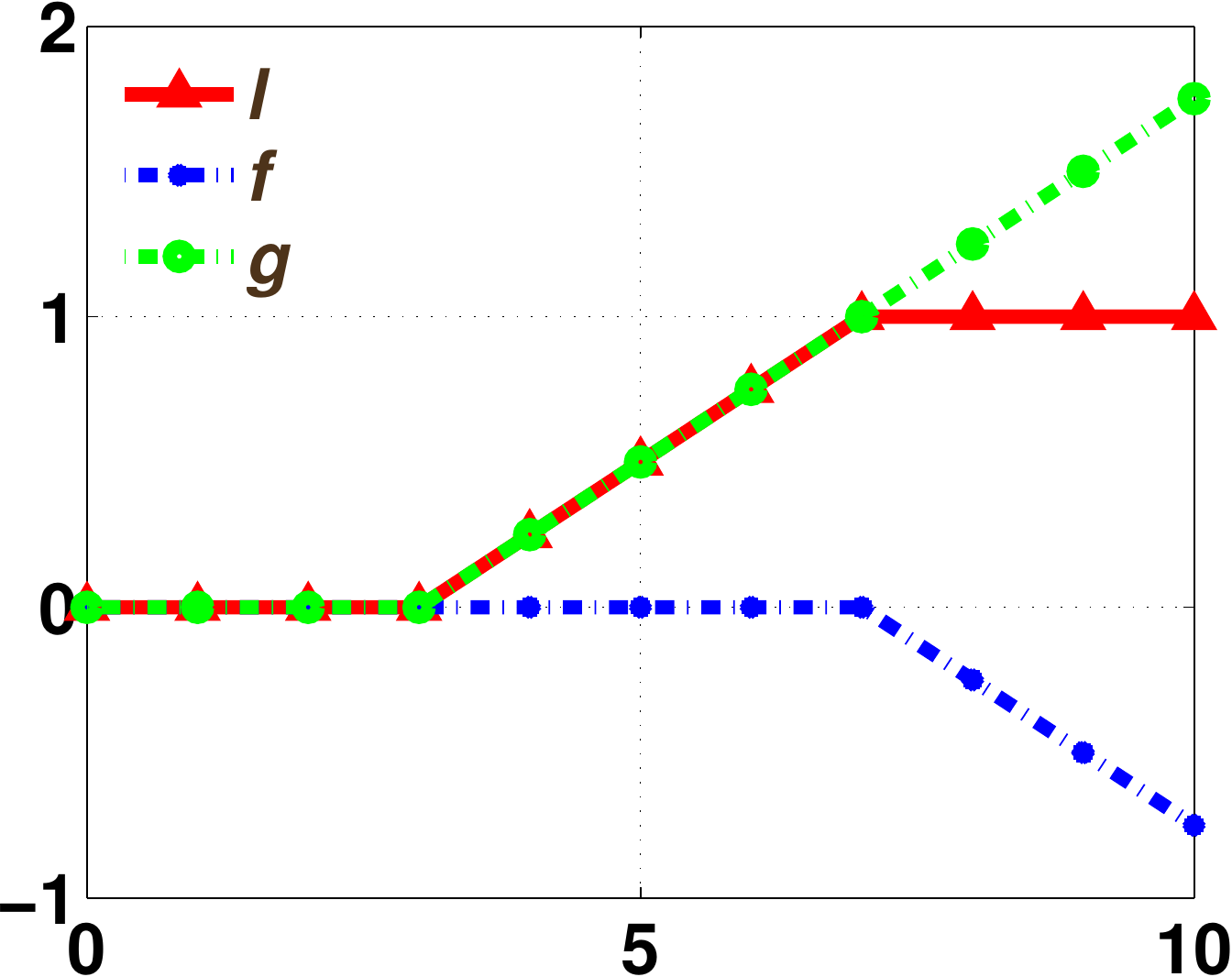}\label{fig:loss3}}
\subfigure[$\Delta_4$]{\includegraphics[width=0.21\linewidth]{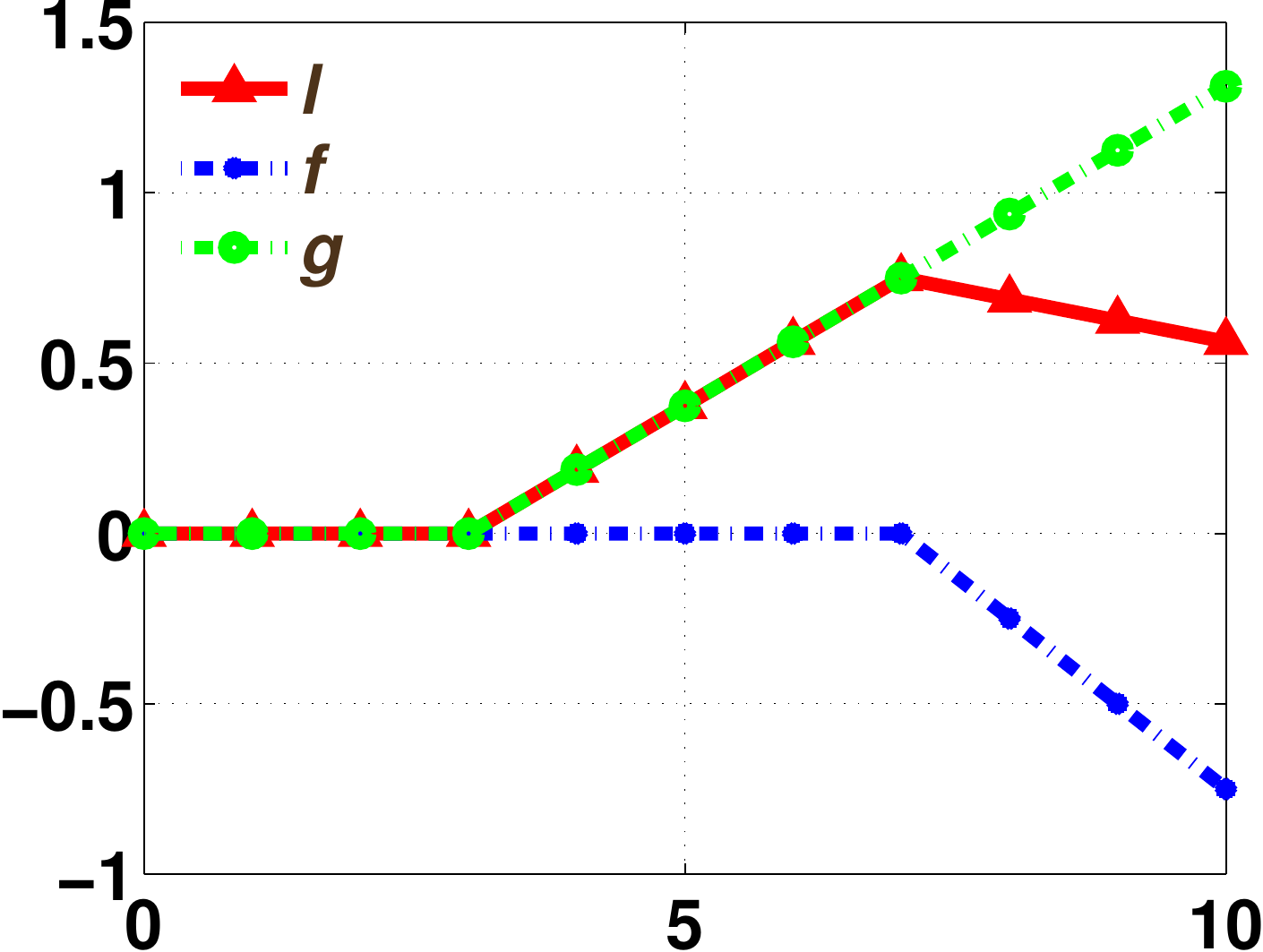}\label{fig:loss4}}
\caption{\label{fig:lossfunction} The plot of the four loss functions used in our experiments as in Equations~\eqref{eq:loss1} to~\eqref{eq:loss4}. The $x$ axis is the number of mispredictions for each track (we show here the loss functions corresponding to track length equal to $10$ as an example), and the $y$ axis is the value of loss function.
The original losses are drawn in {\color{red}red}; the supermodular components are drawn in {\color{green}green}, and the submodular components in {\color{blue}blue}. } 
\end{figure*}

\begin{figure}  \centering
\subfigure[ ]{\includegraphics[width=0.32\linewidth]{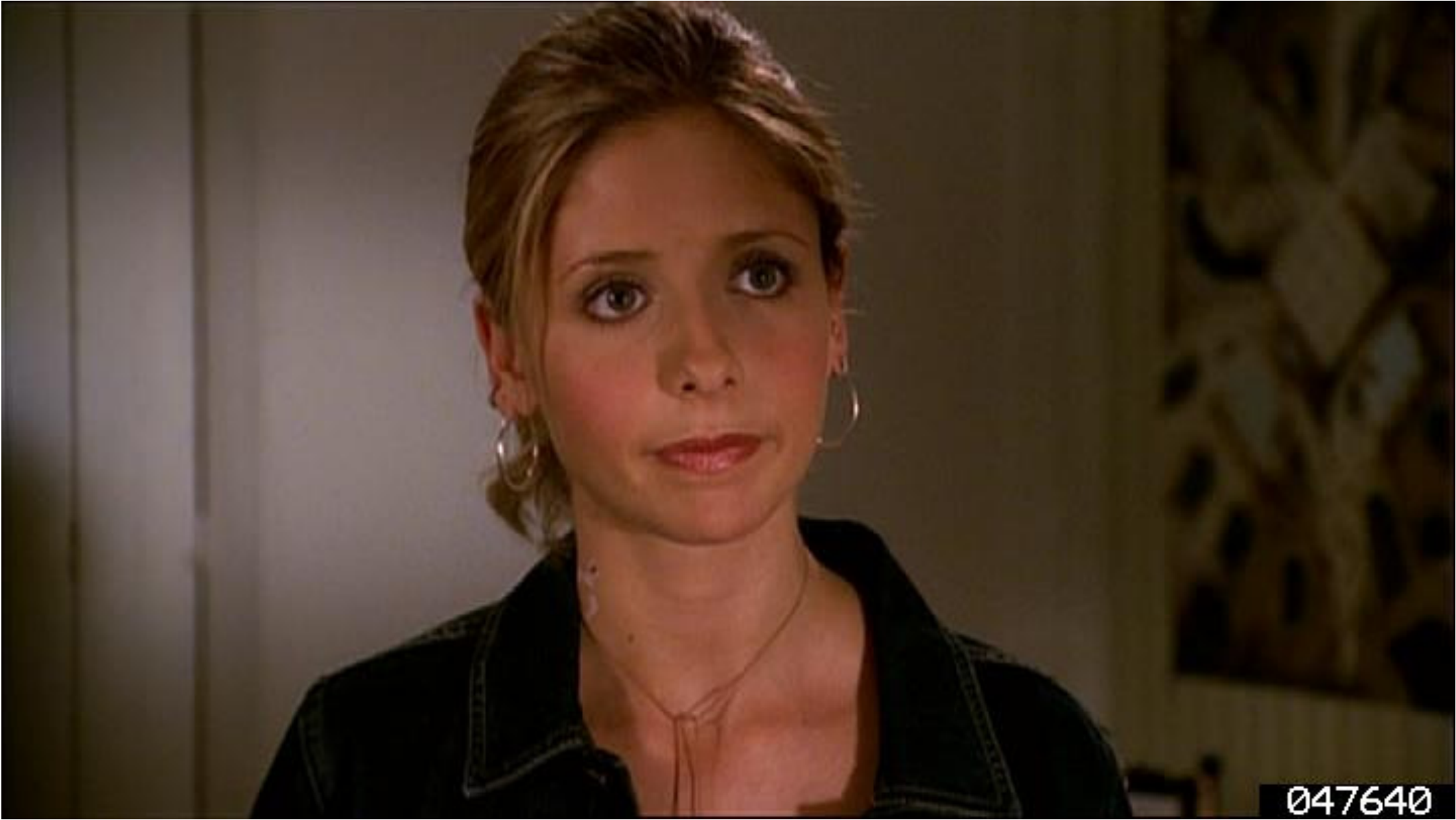}\label{fig:face2}}
\subfigure[ ]{\includegraphics[width=0.32\linewidth]{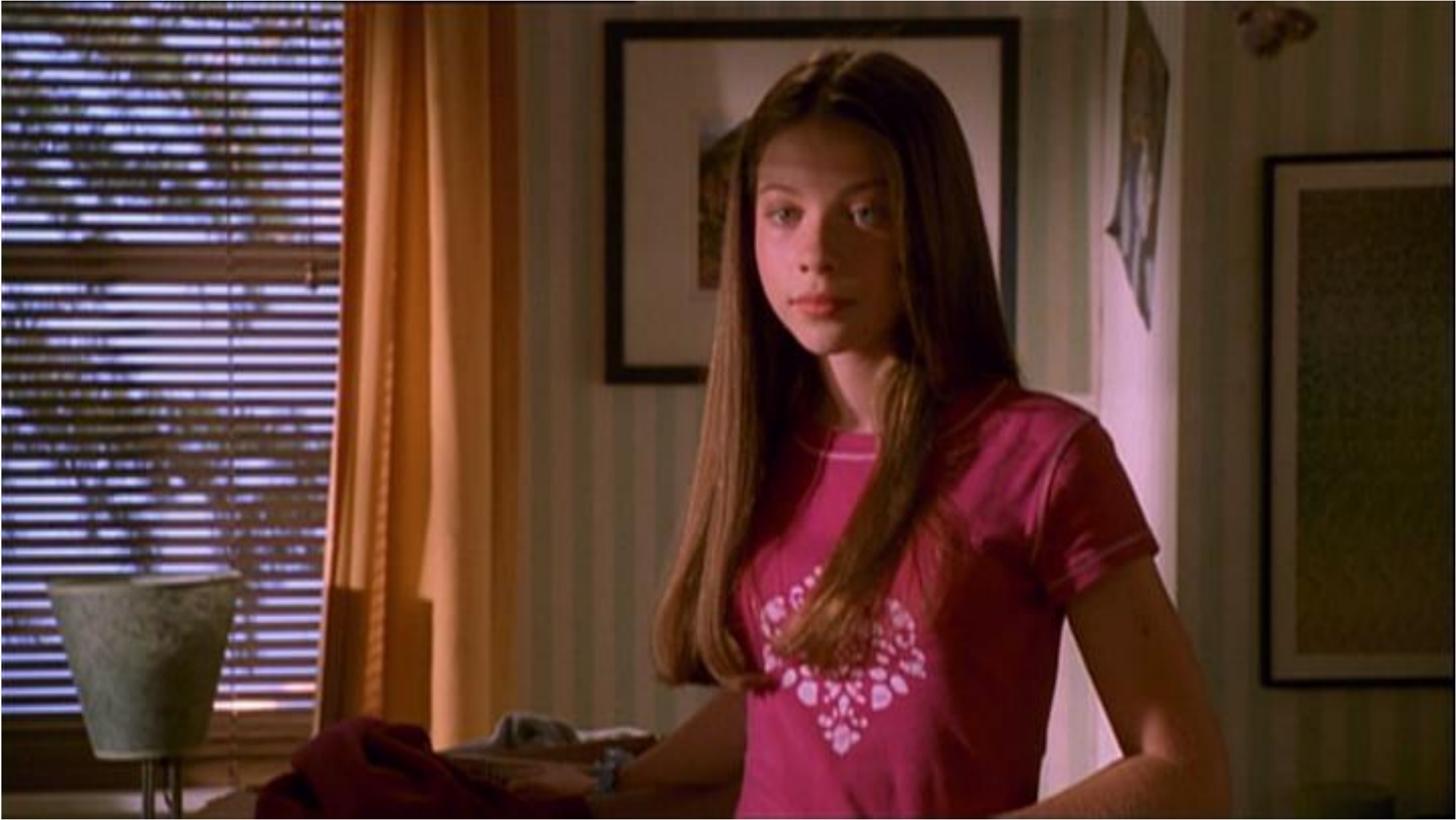}\label{fig:face3}}
\subfigure[ ]{\includegraphics[width=0.18\linewidth]{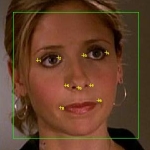}\label{fig:faceWithBox}}
\caption{\label{fig:face}Examples of the face track images. Fig.~\ref{fig:face2} shows the ``Buffy'' role thus a positive-labelled image and Fig.~\ref{fig:face3} shows a negative-labelled image.  An automated pipeline described in~\cite{Everingham06a,Everingham09,Sivic09} was used for feature extraction (Fig.~\ref{fig:faceWithBox}).}	
\end{figure}

We also evaluate the proposed convex surrogate operator on a real-world face track dataset~\cite{Everingham06a,Everingham09,Sivic09}. The frames of the dataset are from the TV series ``Buffy the Vampire Slayer''. This dataset contains 1437 tracks and 27504 frames in total. 

We focus on a binary classification task to recognize the leading role: ``Buffy'' is positive-labelled, ``not Buffy'' is negative-labelled. Example images are shown in Fig.~\ref{fig:face}. Each track is represented as a bag of frames, for which the size of the tracks varies from 1 frame to more than 100 frames, and each image is represented as a Fisher Vector Face descriptor of dimension $1937$. 

%\paragraph{Loss functions}
We have used different non-supermodular and non-submodular loss functions in our experiments as shown in Equations~\ref{eq:loss1} to~\ref{eq:loss4}: 
\begin{align}
&\Delta_1(y,\tilde{y}) =  \min(|\mathbf{I}| , |y|/3 , |\mathbf{I}| - |y|/3 )\label{eq:loss1}\\
&\Delta_2(y,\tilde{y}) =  \min(|\mathbf{I}| , |y|/4 , |\mathbf{I}| - |y|/4, \alpha )\label{eq:loss2}\\
&\Delta_3(y,\tilde{y}) =  \min( \max(0,|\mathbf{I}| -|y|/3 ), |y|/3 )\label{eq:loss3}\\
&\Delta_4(y,\tilde{y}) =  \min(\max(0,|\mathbf{I}| -|y|/3 ),\alpha   )\label{eq:loss4}
\end{align}
$\mathbf{I}=\{i|y^i\neq\tilde{y}^i\}$ gives the set of incorrect prediction elements; $\alpha $ is a parameter that allows us to define the value of $l(V)$. Due to the fact that the size of the tracks varies widely, we further normalize the loss function with respect to the track size. We use $\alpha =2$ for $\Delta_2$ and $\alpha =0.5$ for $\Delta_4$ in the experiments.

As we can see explicitly in Fig.~\ref{fig:lossfunction}, no $\Delta$ is supermodular or submodular. $\Delta_1$, $\Delta_2$ and $\Delta_3$ are increasing loss functions, while $\Delta_4$ is non-increasing. For $\Delta_1$ and  $\Delta_2$, we notice that the values of the set functions on a single element are non-zero i.e.\ $l_1(\{j\})>0,\ l_2(\{j\})>0,\ \forall j\in V$; while for the loss $\Delta_3$ and $\Delta_1$ these values are zero i.e.\ $l_3(\{j\})=l_4(\{j\})=0,\ \forall j\in V$. 

%\mycorr{These non-modular functions are motivated by the fact that in a sequence of frames, the roles that appear in the scene should maintain certain continuity, thus the penalization on the misprediction doesn't need to be linear with respective to the size of misprediction, which is equivalent to have different submodularity of the loss function that is to be minimized.} 

Fig.~\ref{fig:lossfunction} shows the corresponding decomposition of each loss into the supermodular and submodular components as specified in Definition~\ref{def:decomposition}.
We denote each loss function as $l_k = f_k + g_k$, for $k = \{1,2,3,4\}$. % which will effect the submodularity of $f^*$.
%\mycorr{In the case that the functions only depend on the size of input set, the function is submodular iff it is concave; it is supermodular iff it is convex.}

%\mycorr{In practice, we can find their decomposition briefly following these steps: (i) starting from the origin; (ii) if $l(\{j\})>0$, the submodular part $f$ should increase as same as $l$ in order to keep $g$ growing minimally; (iii) if $l(\{j\})=0$, once $l$ become strictly positive, the supermodular part $g$ should increase as same as $l$ in order to maintain $f$ being submodular; (iv) at each point that the \emph{curvature} of $l$ changes, $g$ increase minimally under the constraint of $f$ maintaining submodular.}

By construction, all supermodular $g_k$, for $k = \{1,2,3,4\}$, are non-negative increasing. For the submodular component, $f_1$ is non-negative increasing, $f_2$ is non-negative and non-increasing, while $f_3$ and $f_4$ are both non-positive decreasing. 

We compare different convex surrogates during training for these non-modular functions. And we additionally train on the Hamming loss (labelled 0-1) as a comparison. As training non-supermodular loss with slack rescaling is NP-hard, we have employed the simple application of the greedy approach as in~\cite{krause2012survey}.

\begin{figure*}
\centering
\subfigure[The gap using $\Delta_1$]{\includegraphics[width=0.23\linewidth]{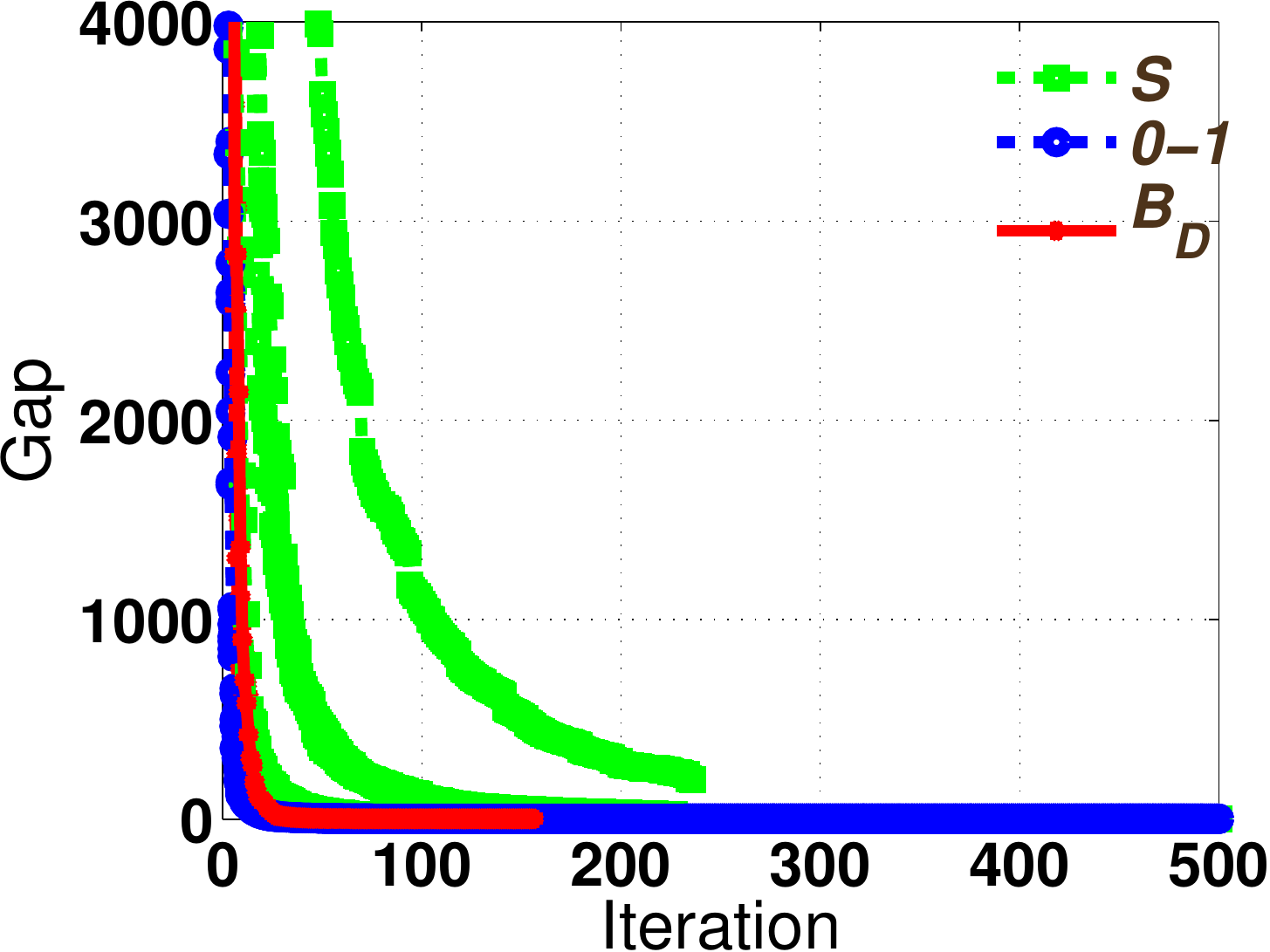}\label{fig:gap1}}
\subfigure[The gap using $\Delta_2$]{\includegraphics[width=0.23\linewidth]{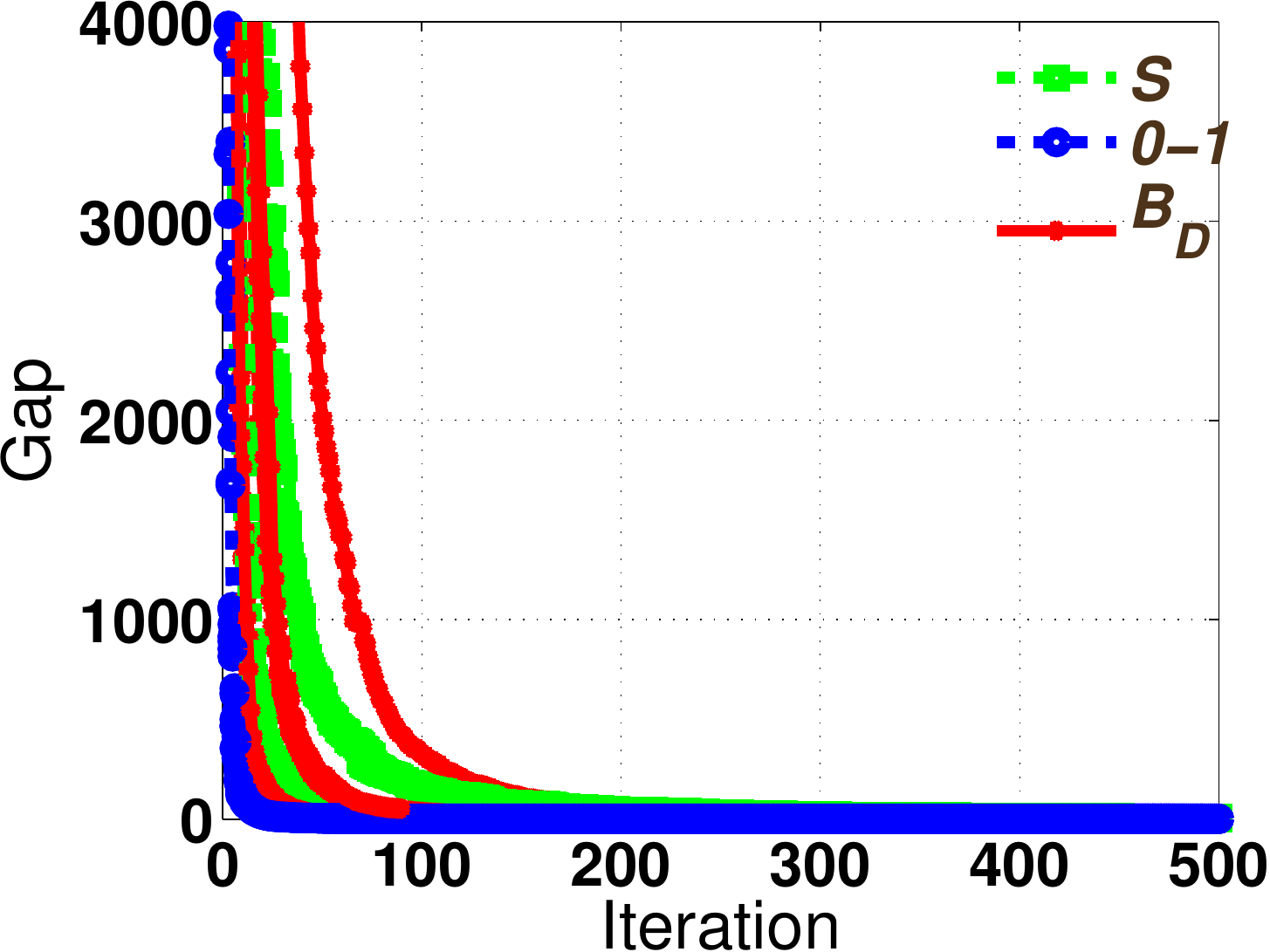}\label{fig:gap2}}
\subfigure[The gap using $\Delta_3$]{\includegraphics[width=0.23\linewidth]{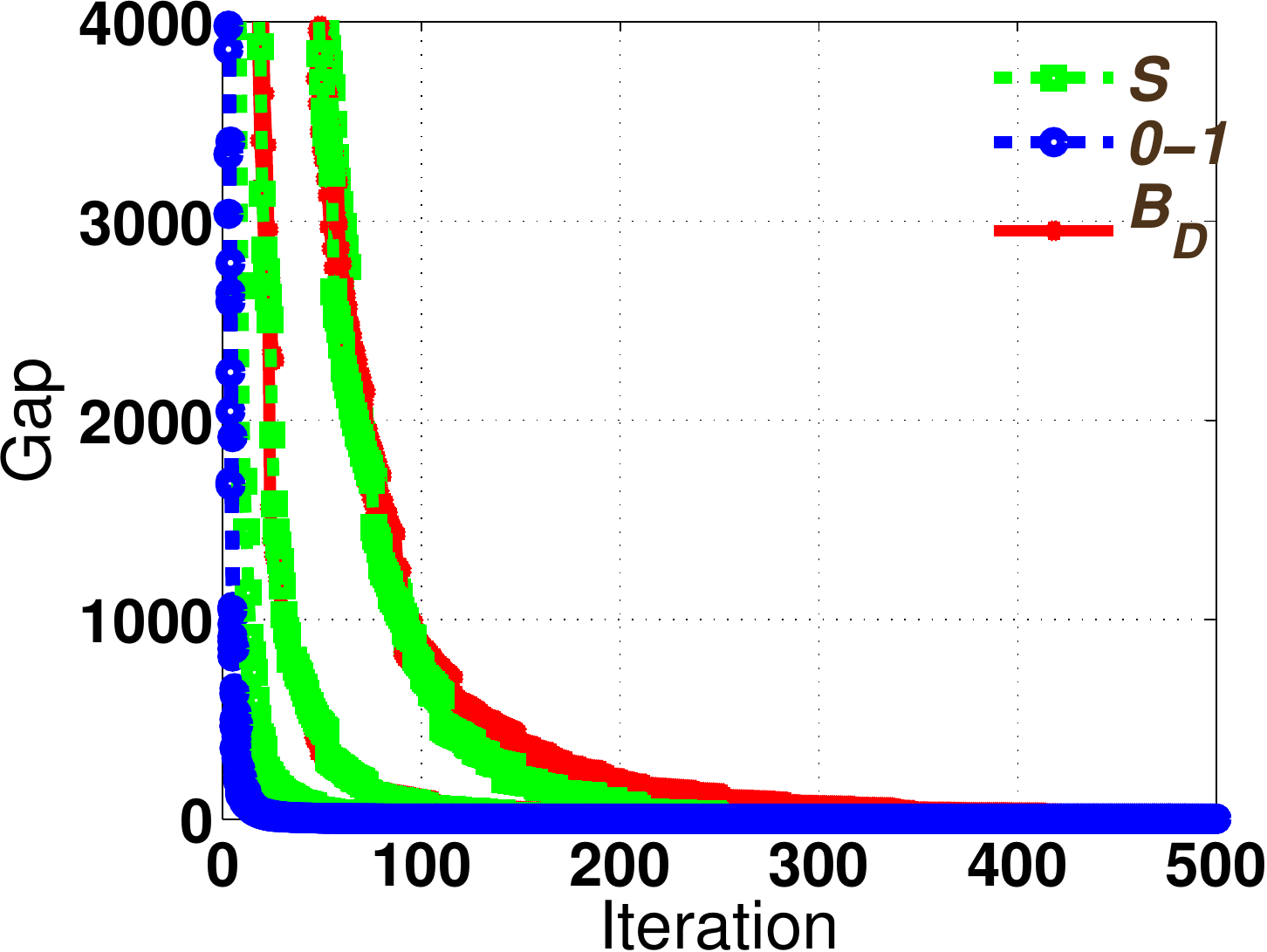}\label{fig:gap3}}
\subfigure[The gap using $\Delta_4$]{\includegraphics[width=0.23\linewidth]{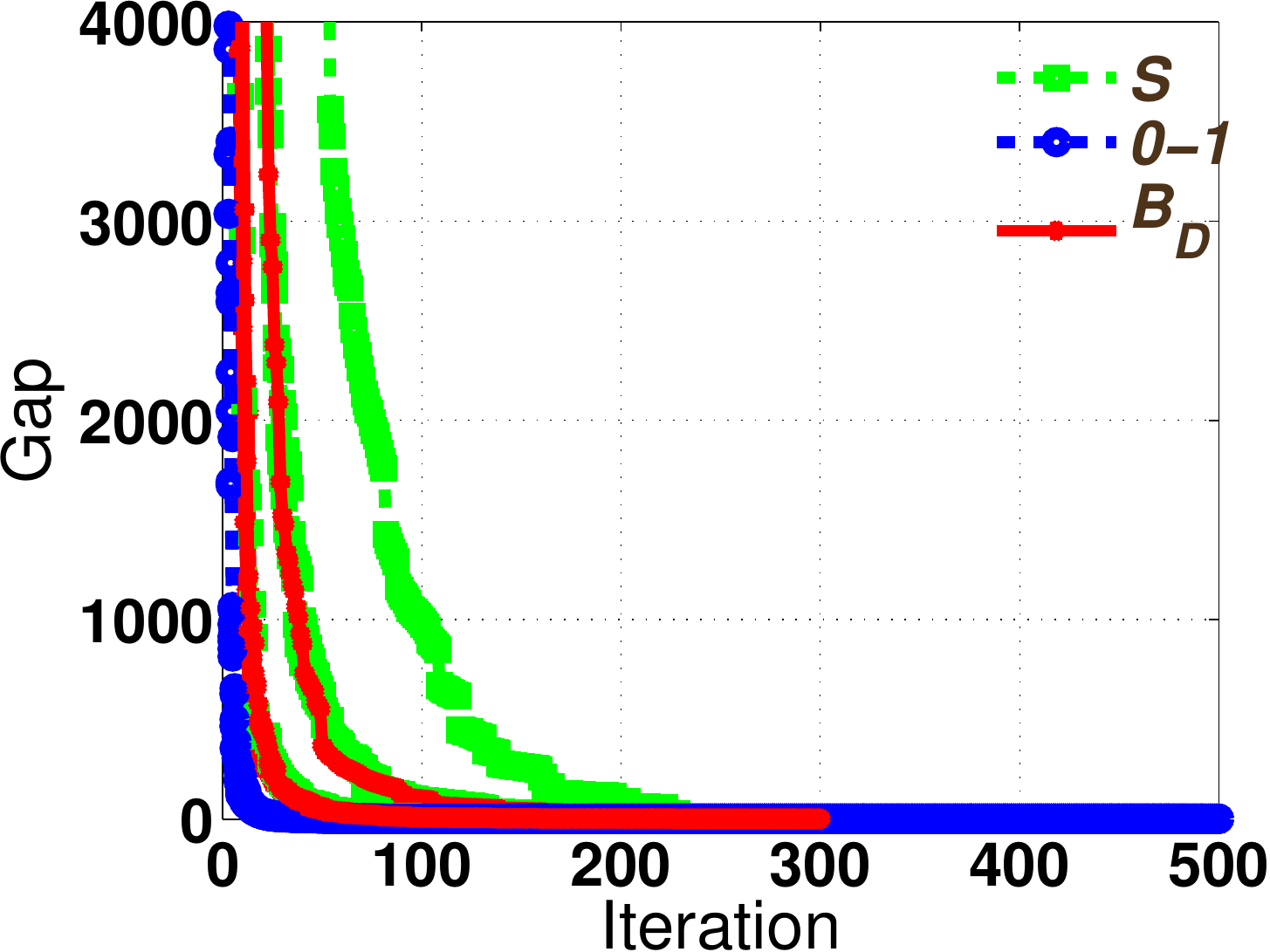}\label{fig:gap4}}
\caption{\label{fig:gap} The primal-dual gap as a function of the number of cutting-plane iterations using different convex surrogates for the four non-modular functions in Equations~\eqref{eq:loss1} to~\ref{eq:loss4}. The primal-dual gap from $\mathbf{B}_\mathbf{D}$ is drawn in {\color{red}red}; the gap from $\mathbf{S}$ is drawn in {\color{green}green}, and gap from Hamming loss (labelled 0-1, and equivalent to a SVM) in {\color{blue}blue}. Our convex surrogate operator $\mathbf{B}_\mathbf{D}$ can achieve a comparable convergence rate to an SVM, demonstrating that optimization is very fast in practice and the method scales well to large datasets.} 
\end{figure*}
%\paragraph{Experiments setting \& results

%For each experiment, we use 30\% of the dataset for training, 30\% for validation and the rest for testing. %We retrain on the training/validation set and finally test on the testing dataset. 
10-fold-cross-validation has been carried out and we obtain an average performance and standard error as shown in Table~\ref{tab:error}.

From Table~\ref{tab:error} we can see that when the submodular component of the decomposition is non-negative, i.e.\ in the case of using $\Delta_1$ and $\Delta_2$, the lowest empirical error is achieved by using our convex surrogate operator $\mathbf{B}_\mathbf{D}$.

Fig.~\ref{fig:gap} shows the primal-dual gap as a function of the cutting plane iterations for each experiment using different loss functions and different convex surrogate operators. We can see that in all cases, the convergence of $\mathbf{B}_\mathbf{D}$ is at a rate comparable to an SVM, supporting the wide applicability and scalability of the convex surrogate. 
\begin{table}[]\centering
\resizebox{0.9\linewidth}{!}{
\begin{tabular}{c|c|c|c}
\hline
		 &  $p=10$   &  $p=50$   & $p=100$  \\ 
\hline 
 $\mathbf{B}_\mathbf{D}$		& $0.002\pm 0.000$ & $0.018\pm 0.003$ & $0.060\pm 0.008$ \\
\hline
$\mathbf{S}$					& $0.002\pm 0.000$ & $0.016\pm 0.002$ & $0.057\pm 0.002$\\
\hline
\end{tabular}}
\caption{\label{tab:time} The comparison of the computation time (s) for one loss augmented inference.}
\end{table}
We have also compared the expected time of one loss augmented inference. Table~\ref{tab:time} shows the comparison using $\Delta_1$ with $\mathbf{B}_\mathbf{D}$ and $\mathbf{S}$.  As the cost per iteration is comparable to slack-rescaling, and the number of iterations to convergence is also comparable, there is consequently no computational disadvantage to using the proposed framework, while the statistical gains are significant.  
%As learning with the Lov\'{a}sz hinge $\mathbf{L}$ with submodular losses is comparable in the number of iterations to convergence to slack-rescaling, the total computation time by $\mathbf{B}_\mathbf{D}$ is on average less than two times the time required by $\mathbf{S}$.  The factor of two is due to the two bounding functions (cf.\ Eq.~\eqref{eq:operatorB}).

\section{Discussion and Conclusions}

The experiments have demonstrated that the proposed convex surrogate is efficient, scalable, and reduces test time error for a range of loss functions, including the S{\o}rensen-Dice loss, which is a popular evaluation metric in many problem domains.  We see that slack rescaling with greedy inference can lead to poor performance for non-supermodular losses.  This is especially apparent for the results of training with $\Delta_1$, in which the test-time loss was approximately double that of the proposed method.  Similarly, ignoring the loss function and simply training with 0-1 loss can lead to comparatively poor performance, e.g.\ $\Delta_1$ and $\Delta_2$.  This clearly demonstrates the strengths of the proposed method for non-modular loss functions for which a decomposition with a non-negative submodular component is possible ($\Delta_1$ and $\Delta_2$, but not $\Delta_3$ or $\Delta_4$).  The characterization and study of this family of loss functions is a promising avenue for future research, with implications likely to extend beyond empirical risk minimization with non-modular losses as considered in this paper.  The primal-dual convergence results empirically demonstrate that the loss function is feasible to apply in practice, even on a dataset consisting of tens of thousands of video frames.  The convex surrogate is directly amenable to other optimization techniques, such as stochastic gradient descent~\cite{bottou-bousquet-2008}, or Frank-Wolfe approaches~\cite{ICML2013_lacoste-julien13}, as well as alternate function classes including neural networks.

In this work, we have introduced a novel convex surrogate for general non-modular loss functions. 
We have defined a decomposition for an arbitrary loss function into a supermodular non-negative function and a submodular function.  We have proved both the existence and the uniqueness of this decomposition. 
Based on this decomposition, we have proposed a novel convex surrogate operator taking the sum of two convex surrogates that separately bound the supermodular component and the submodular component using slack-rescaling and the Lov\'{a}sz hinge, respectively.  
We have demonstrated that our new operator is a tighter approximation to the convex closure of the loss function than slack rescaling, that it generalizes the Lov\'{a}sz hinge, and is convex, piecewise linear, an extension of the loss function, and for which subgradient computation is polynomial time.  Open-source code of $\ell_2$ regularized risk minimization with this operator is available for download from \projecturl.

\subsection*{Acknowledgements}
This work is partially funded by Internal Funds KU Leuven, ERC Grant 259112,
and FP7-MC-CIG 334380.  The first author is supported by a fellowship from
the China Scholarship Council.

\bibliographystyle{abbrv}
\bibliography{nips2015,biblio}

\end{document}